\documentclass[11pt,a4paper,final]{article}
\pdfoutput=1
\usepackage[utf8]{inputenc}
\usepackage{amsmath}
\usepackage{amsfonts}
\usepackage{amssymb}
\usepackage{graphicx}
\usepackage{xcolor}
\usepackage[left=1in,right=1in,top=1in,bottom=1in]{geometry}
\usepackage[breakable]{tcolorbox}

\usepackage{booktabs} % For formal tables
\usepackage[ruled, noend]{algorithm2e}
\usepackage{tikz}
\usetikzlibrary{arrows}
\usetikzlibrary{shapes.multipart}
\usepackage{caption}

\SetAlFnt{\small}
\SetAlCapFnt{\small}
\SetAlCapNameFnt{\small}
\SetAlCapHSkip{0pt}
\IncMargin{-\parindent}
\usepackage{natbib}
\setcitestyle{authoryear}

\usepackage[T1]{fontenc}    % use 8-bit T1 fonts
\usepackage{hyperref}       % hyperlinks
\newcommand{\declarecolor}[2]{\definecolor{#1}{RGB}{#2}\expandafter\newcommand\csname #1\endcsname[1]{\textcolor{#1}{##1}}}

\declarecolor{White}{255, 255, 255}
\declarecolor{Black}{0, 0, 0}
\declarecolor{Maroon}{128, 0, 0}
\declarecolor{Coral}{255, 127, 80}
\declarecolor{Red}{182, 21, 21}
\declarecolor{LimeGreen}{50, 205, 50}
\declarecolor{DarkGreen}{0, 90, 0}
\declarecolor{Navy}{0, 0, 128}

\definecolor{plotblue}{HTML}{377eb8}
\definecolor{plotorange}{HTML}{ff7f00}
\definecolor{plotgreen}{HTML}{4daf4a}

\hypersetup{
    colorlinks=true,
    citecolor=DarkGreen,
    linkcolor=Navy,
    filecolor=magenta,      
    urlcolor=black,
    pdfpagemode=FullScreen,
}
\usepackage{url}            % simple URL typesetting
\usepackage{booktabs}       % professional-quality tables
\usepackage{nicefrac}       % compact symbols for 1/2, etc.
\usepackage{microtype}      % microtypography
\usepackage{mathtools}
\usepackage{amsthm}
\usepackage{bbm}
\usepackage{tikz} 
\usepackage{wrapfig}
\usepackage{floatrow}
\usepackage{enumitem}
\usepackage{authblk}
\usepackage[capitalise,noabbrev]{cleveref}
\usepackage{booktabs}

\usepackage{color-edits}
\addauthor{kh}{red}
\addauthor{iws}{red}
\addauthor{pa}{cyan}

\usepackage[ruled]{algorithm2e}
\usepackage{algorithmic}
\usepackage{caption}
\usepackage{subcaption}
\usepackage{thm-restate}

\usepackage{titlesec}

\titlespacing*{\paragraph}
{0pt}      % left indent
{1.0ex}    % space before
{0.8em}    % space after (before the text begins)

\newtheorem{theorem}{Theorem}
\newtheorem{corollary}{Corollary}

\newtheorem{assumption}{Assumption}

\DeclareMathOperator*{\argmax}{arg\,max}

\newcommand{\diag}{\text{diag}}

\newcommand{\qualityattatn}{q_{n}^{(t)}}

\newcommand{\qualityatt}{\boldsymbol{q}^{(t)}}

\newcommand{\qualityattatone}{q_{1}^{(t)}}
\newcommand{\qualityattatN}{q_{N}^{(t)}}

\newcommand{\learnqualityattatn}{q_{n, \text{learn}}^{(t)}}

\newcommand{\commqualityattatn}{q_{n, \text{comm}}^{(t)}}

\newcommand{\weight}{\boldsymbol{p}}

\newcommand{\weightatn}{p_{n}}

\newcommand{\model}{\boldsymbol{w}}

\newcommand{\modelset}{\mathcal{W}}

\newcommand{\modelatt}{\boldsymbol{w}^{(t)}}
\newcommand{\modelatT}{\boldsymbol{w}^{(T)}}
\newcommand{\modelattatn}{\boldsymbol{w}_{n}^{(t)}}
\newcommand{\modelattplusone}{\boldsymbol{w}^{(t+1)}}

\newcommand{\bZ}{\boldsymbol{Z}}
\newcommand{\bs}{\boldsymbol{s}}
\newcommand{\bv}{\boldsymbol{v}}

\newcommand{\cN}{{\mathcal N}}
\newcommand{\cO}{{\mathcal O}}

\newcommand{\cS}{{\mathcal S}}

\begin{document}

\title{Adaptive Determinantal Client Scheduling in Federated Learning}
\author[1]{Wen Xu}
\author[1]{Ben Liang}
\author[2]{Gary Boudreau}
\author[2]{Hamza Sokun}

\affil[1]{University of Toronto, Canada}
\affil[2]{Ericsson, Canada}
\date{}

\maketitle
\vspace*{-0.4in}

\begin{abstract}
Scheduling clients for model training is critical in federated learning due to both data and system heterogeneity. Most previous works focus on the quality of the scheduled clients to achieve faster convergence, shorter wall-clock convergence time, or better average model performance. They rarely consider the diversity of clients, which is important to counter heterogeneity and improve performance for the worst-off clients. In this work, we advocate the use of determinantal point processes (DPPs) to model and enhance the diversity in client scheduling. We first design the kernel matrices of DPPs using gradient information and quality scores, which inherently enables a flexible quality-diversity trade-off. Applying fast MAP inference over DPPs, we propose \underline{A}daptive \underline{D}eterminantal \underline{C}lient \underline{S}cheduling  (\textsc{ADCS}) in FL. We further quantify the gradient approximation error of \textsc{ADCS} and develop convergence analysis for general biased client selection in FL with non-convex loss functions. We conduct comparative numerical experiments showing that \textsc{ADCS} outperforms state-of-the-art client scheduling algorithms, including both quality-based and diversity-based ones.
\end{abstract}

\section{Introduction}\label{sec:intro}
Federated learning (FL) has become a dominant distributed machine learning paradigm in mobile and edge computing~\citep{FL21_survey}. It is often performed over a large number of edge devices, i.e., clients, such as mobile phones or IoT devices, with the assistance of a central server~\citep{MLSys19_Bonawitz}. This can lead to a high level of heterogeneity among the devices, in terms of their data and hardware. 

Many algorithms have been proposed for FL since the advent of the original \textsc{FedAvg}~\citep{AISTATS17_McMahan}. Most FL algorithms follow a common pattern in each training round: i) client scheduling at the server, ii) model broadcasting from the server to the selected clients, iii) local computation at selected clients, iv) model or gradient transmission from the clients to the server, and v) model aggregation at the server. The first step, i.e., client scheduling at the server, is essential since the number of available clients can be large in real-world applications, and selecting all clients in all rounds can be inadvisable or even prohibitive. The stragglers in computation or communication can greatly impede the wall-clock training time~\citep{FL21_survey}. Furthermore, na\"ive partial client participation in FL is often sub-optimal: The data distributions among the clients are typically not identically distributed. Without carefully addressing the heterogeneity issue, a na\"ive optimization method does not produce favorable convergence behavior. 

Most previous methods consider the following optimization formulation for client scheduling:
\begin{align}\label{DPP:eqn:quality}
    \max_{\cS^{(t)}: |\cS^{(t)}| = m} \; \sum_{n \in \cS^{(t)}} \qualityattatn,
\end{align}
where $\qualityattatn$ is some quality score of client $n$ in round $t$ and $\cS^{(t)}$ is the subset of clients to be scheduled in round $t$, whose cardinality is $m$. For example, \textsc{PowerOfChoice}~\citep{AISTATS22_Cho} chooses $\qualityattatn$ to be the local loss of client $n$ evaluated by the current global model $\modelatt$; optimal client sampling (\textsc{OCS})~\citep{arXiv21_CHR} chooses $\qualityattatn$ as the gradient norm of client $n$ at round $t$; and uniform client scheduling~\citep{AISTATS17_McMahan} can be interpreted as setting $\qualityattatn$ to a constant value and performing random tie breaking. However,~\eqref{DPP:eqn:quality} only accounts for additive individual quality terms, which fails to capture the structure of data heterogeneity among clients. For example, data distributions among the clients can have cluster structures, leading to the tendency for high-quality clients to come from the same clusters. Thus, the quality-only formulation in~\eqref{DPP:eqn:quality} tends to select redundant clients while leaving clients with a broader spectrum of data distributions unused.

To address this problem, in this work we consider both the diversity and quality in FL client scheduling, with an aim to improve the performance of learning for the worst-off clients. We model diversity by determinantal point processes (DPPs)~\citep{AAP75_Macchi, FTML12_Kulesza} and maximize a combination of both diversity and quality in client scheduling. This optimization problem is challenging since a na\"ive algorithm has to go through all possible subsets of clients of size $m$, which is exponential and thus impractical. It is also challenging to analyze how adding diversity consideration impacts the convergence of FL since the resulting client scheduling is often statistically biased.

We mitigate these challenges with the following contributions.
\begin{itemize}

    \item We formulate client scheduling in FL as a unified optimization problem that jointly captures diversity and quality scores across client updates, which recovers quality-only and diversity-only policies as special cases and allows tuning their tradeoff. We propose Adaptive Determinantal Client Scheduling (\textsc{ADCS}), which schedules clients via greedy MAP inference on a DPP kernel constructed from client update similarity and quality scores in each training round. 
    \item We quantify the gradient approximation bias of~\textsc{ADCS} under general cluster structures for the client data distributions. Since considering diversity leads to statistically biased client sampling, we also derive a novel convergence bound for FL performance under biased client scheduling for non-convex loss functions. This provides performance guarantee for \textsc{ADCS}.
    \item We conduct numerical experiments to show that \textsc{ADCS} can achieve significant gains on FL test accuracy for the worst-off clients, compared with state-of-the-art client scheduling strategies. Meanwhile, it maintains competitive average accuracy. 
\end{itemize}

\section{Related Work}\label{DPP:sec:related}
\textbf{Client scheduling in FL.} Communication overhead and data heterogeneity are two major issues in FL since both the communication conditions and the underlying data distributions can be highly heterogeneous when the number of clients is large. To reduce communication overhead and mitigate data heterogeneity, numerous client scheduling strategies have been proposed. They include selecting clients for partial participation uniformly at random or via probabilities proportional to the local dataset sizes~\citep{AISTATS17_McMahan}, selecting clients based on the norm of the gradient updates~\citep{arXiv21_CHR}, selecting clients based on the values of local loss functions~\citep{AISTATS22_Cho}, and selecting clients via solving submodular maximization problems~\citep{ICLR22_Balakrishnan}. Besides the common considerations of the learning performance and the optimization convergence with respect to iterations, other factors such as wall-clock training time and resource allocation are widely considered in FL, especially in wireless FL (see for example~\citep{ICC19_Nishio,TWC21_Xu,INFOCOM22_Luo,IOTJ22_Yu,TVT24_Zheng,TOMPECS25_Xu}). None of these works considers client diversity.

\textbf{Diversity and DPP in FL.} Initially developed for modeling fermions, the DPP is a fundamental tool in quantum physics~\citep{AAP75_Macchi}. It has recently received significant attention in broader contexts due to its elegant theory and favorable empirical performance~\citep{FTML12_Kulesza}. DPPs are commonly used in scenarios where an informative and diverse subset needs to be selected from a large ground set. Typical applications include clustering~\citep{NeurIPS13_Kang}, statistical inference~\citep{JRSS15_Lavancier}, cellular networks~\citep{TCOM15_Li}, recommender systems~\citep{NeurIPS18_Chen}, generative models~\citep{ICML19_Elfeki}, and image processing~\citep{JIS21_Launay}.

Several prior works leverage DPPs in FL~\citep{INFOCOM21_Li,ICASSP23_Zhang,DCOSS24_Bastola}. In~\citep{INFOCOM21_Li}, DPP is used only in a pre-training client-selection stage, while the subsequent round-wise dynamics are handled by a separate importance-based mechanism. In~\citep{ICASSP23_Zhang}, a k-DPP is constructed once from one-shot client profiles and then reused throughout training. Thus different client groups can be selected in different rounds, but under a fixed kernel that does not depend on the evolving model. In~\citep{DCOSS24_Bastola}, the quality-diversity kernel is formed once from initialization-time features and loss summaries. The selected subset is then kept fixed during training. In all these methods, the DPP kernel itself is never updated from fresh training signals. A truly round-adaptive extension of these methods is possible in principle, but it would require repeated collection of fresh client-side information, online kernel reconstruction, and repeated DPP inference during training, which increases overhead and complicates analysis. In contrast, our proposed \textsc{ADCS} explicitly constructs update-dependent kernels in each round of training, and we analyze the resulting gradient approximation bias. Furthermore, unlike the heuristic nature of~\citep{INFOCOM21_Li,ICASSP23_Zhang,DCOSS24_Bastola}, we provide performance guarantee for our approach in terms of a novel FL convergence bound despite the bias in client scheduling.
\section{Preliminaries}\label{sec:DPP:preliminaries}
\subsection{FL System}
We consider a standard FL system that includes a central server and multiple clients, which may be mobile or IoT devices~\citep{MLSys19_Bonawitz}. We denote the index set of clients by $\cN = [N] = \{1,2,\dots,N\}$. The server coordinates the training of a machine learning model utilizing the local datasets of the clients. The optimization problem of model training is given by
\begin{align}\label{eqn:def:finitesum}
    \min_{\model \in \modelset} f(\model)  :=  \sum_{n =1}^{N} \weightatn f_{n}(\model),
\end{align}
where $\model \in \modelset \subseteq \mathbb{R}^{d}$ is a vector containing the model parameters, $\weight \in \Delta_{N-1} \triangleq \{\weight \mid \weightatn \!\geq\! 0, \forall n \!\in\! [N] \text{ and } \sum_{n=1}^{N}\weightatn = 1\}$ is a given weight vector, and $f_{n}(\cdot)$ is the local loss function that is only accessible at client $n$.

In general, an FL algorithm consists of the following five steps in each of $T$ training rounds. i) The central server selects a subset of clients for participation. ii) The central server broadcasts the current global model and sometimes also auxiliary variables to all selected clients. iii) Each selected client performs local updates on the model parameters and the auxiliary variables if needed. iv) Each selected client sends the updated model parameters or the updates, sometimes with the auxiliary variables, to the central server. v) The central server aggregates all local updates to obtain a new global model and updates auxiliary variables if needed. We use $\modelatt$ to represent the global model at the server at the beginning of round $t$. We use $\modelattatn$ to represent the local model at client $n$ before any local update in round $t$ and $\model_{n}^{(t, e)}$ to represent the local model after $k$ steps of local update. We denote the local update for client $n$ in round $t$ as $g_{n}^{(t)}$, which can have different forms depending on what local computation is used for local model updates in the FL algorithms. For example, if the local computation is full-batch gradient descent, then $g_{n}^{(t)} = \nabla f_{n}(\modelattatn)$; if the local computation is single step mini-batch gradient descent, then $g_{n}^{(t)} = \nabla f_{n}(\modelattatn; \xi^{(t)})$; if the local computation is $E$ step mini-batch gradient descent, then $g_{n}^{(t)} = \sum_{e=1}^{E} \nabla f_{n}(\model_{n}^{(t, e)}; \xi^{(t, e)})$.

In real-world FL applications, it is not recommended or even prohibitive to schedule all clients for training in each round. For example, due to the large number of clients that have heterogeneous computation capabilities and channel conditions, selecting all of them can result in an extremely high round time dominated by the stragglers. Therefore, in this work we focus on partial client scheduling.

If all clients were scheduled for participation, the global update would be $\bar{g}^{(t)}_{\cN} = \frac{1}{N}\sum_{n=1}^{N}\weightatn g_{n}^{(t)}$. For partial client participation, the server only selects a subset of client $\cS^{(t)}$ and constructs global update $\bar{g}^{(t)}_{\cS^{(t)}} = \frac{1}{m}\sum_{n \in \cS^{(t)}}\weightatn g_{n}^{(t)}$. We determine $\cS^{(t)}$ by first modeling the distribution of all subsets of size $m$ as a DPP, which captures the quality and diversity of the clients, and then finding the mode of that distribution.

\subsection{Diversity Modeling with DPPs}
We follow the general approach to construct a DPP via $L$-ensemble, which directly models the atomic probabilities of all subsets of the ground set $\cN$, i.e., the set of all clients. This definition of $L$-ensemble was first proposed in~\citep{JSP05_Borodin} and is a common choice when modeling real data~\citep{FTML12_Kulesza}. Specifically, the probability of the random subset $\bZ$ realized to be $Z$ is defined as
\begin{align}\label{eqn:L_ensemble}
    P_{L}(\bZ = Z) = \frac{\det(L_{Z})}{\det(L+I_{N})},
\end{align}
where $L$ is a positive semidefinite matrix, $L_{Z}$ is the restriction of $L$ to the set $Z$, and $I_{N}$ is an $N\times N$ identity matrix. 

A key property of DPP is that it promotes diversity. Suppose the representation vector of an item $n$ is $b_{n}$, which is also the $n$th column of a matrix $B$. Then, we can define the kernel $L = B^\top B$. Clearly, $L_{nn'} = \langle b_{n}, b_{n'}\rangle$ can be viewed as a measure of similarity between items $n$ and $n'$, which corresponds to the celebrated cosine similarity if all $b_{n}$'s are normalized. Geometrically, $\det(L_{Z})$ is the squared volume spanned by all vectors in $\{b_{n}: n \in Z\}$. Thus, the subset $Z \subseteq \cN$ whose corresponding $L_{Z}$ spans a larger volume will be assigned a higher probability by definition.

Here, we further restrict our consideration to $m$-DPP, which defines a probability distribution over all subsets $Z \subseteq \cN$ that have cardinality of exactly $m$~\citep{ICML11_Kulesza}.\footnote{Note that in the literature $m$-DPP is often called $k$-DPP. We use $m$ instead of $k$ to avoid notational confusion since we use $k$ to represent the cluster index of clients.} Formally, an $m$-DPP can be generated by conditioning a standard $L$-ensemble DPP on the event that the random set $\bZ$ has cardinality $m$, i.e.,
\begin{align}
    P_{L}^{m}(\bZ = Z) = \frac{\det(L_{Z})}{\sum_{|Z'|=m}\det(L_{Z'})}.
\end{align}

% We consider finding the subset of clients $Z \subseteq \cN$ with cardinality $m$ that has the highest probability, i.e.,
% \begin{align}
%     \argmax_{Z: |Z| = m} \; \log\det(L_{Z}).
% \end{align}
% It is clear that $m$-DPP allows explicit control over the number of selected items.\footnote{We can also formulate an optimization problem with the constraint being $|Z| \leq m$. We choose the equality for simplicity.} 
% This is later used to model client selection in FL, where $m$ clients out of $N$ clients are selected for training. 

% We will explain this in detail in the next subsection.

% However, unlike sampling, marginalization, and conditioning of DPPs, finding the most possible configuration, i.e., maximum a posterior (MAP), in both DPP and $m$-DPP is NP-Hard~\citep{FTML12_Kulesza}. Approximation or greedy algorithms are commonly used for MAP inference in DPP and $m$-DPP~\citep{NeurIPS12_Gillenwater,ICML17_Han,NeurIPS18_Chen,NeurIPS22_Hemmi}.

% based on the submodularity of the logarithm of its probabilities. Informally, a submodular set function has diminishing returns, i.e., adding more input has a decreasing extra benefits. Mathematically, for any $L$-ensemble DPP, $P_{L}$ is log-submodular, i.e.,
% \begin{align}
%     \log(P_{L}(Z' \cup \{n\})) - \log(P_{L}(Z')) \leq  \log(P_{L}(Z \cup \{n\})) - \log(P_{L}(Z)),
% \end{align}
% for any $Z, Z' \subseteq \cZ$ with $Z \subseteq Z'$ and any $n \in \cZ\setminus Z'$. 

\section{Adaptive Determinantal Client Scheduling}\label{DPP:sec:algorithm}
% \subsection{\textsc{ADCS} Algorithm Design}
In the following, we present the proposed \textsc{ADCS} algorithm to effectively optimize a weighted combination of the diversity and quality of the scheduled clients in FL.

\subsection{\texorpdfstring{$L$}{L}-ensemble Kernel Design}
Designing an effective $L$-ensemble kernel for FL is non-trivial. An ideal kernel should capture redundancy among clients under the current global model, while remaining computable at the server without access to raw local data distributions. Since clients with similar data distributions tend to induce similar local updates under any current model, we use local updates as a dynamic proxy for client similarity. Specifically, we use the local update $g_{n}^{(t)} \in \mathbb{R}^{d}$ of each client $n$ to design the $L$-ensemble in round $t$. Let the normalized local update of client $n$ be $\phi_{n}^{(t)} = g_{n}^{(t)}/\|g_{n}^{(t)}\|$ and the update matrix $\Phi^{(t)} = [\phi_{1}^{(t)}, \dots, \phi_{N}^{(t)}]$. We design the kernel as a positive-semidefinite Gram matrix 
\begin{align}\label{eqn:L_ensemble:kernel}
    C^{(t)} = \left(\Phi^{(t)}\right)^\top \Phi^{(t)}.
\end{align} 
Equivalently, the element on any row $n \in [N]$ and any column $n' \in [N]$ of $C^{(t)}$ is defined as $C_{nn'}^{(t)} = (\phi_{n}^{(t)})^\top \phi_{n'}^{(t)}$. The kernel $ C^{(t)}$ reflects directional similarity.

The matrix $C^{(t)}$ is used as the diversity term in the following client scheduling optimization formulation
\begin{align}\label{eqn:DPP:log_prob}
    \max_{\cS^{(t)}: |\cS^{(t)}| = m} \; & (1-\theta)\log\det\left(C_{\cS^{(t)}}^{(t)}\right) + \theta \sum_{n \in \cS^{(t)}} \qualityattatn,
\end{align}
where $C_{\cS^{(t)}}^{(t)}$ is the restriction of $C^{(t)}$ to the set $\cS^{(t)}$, $\qualityattatn \in \mathbb{R}_{+}$ is the quality score of client $n$ in round $t$, and $\theta \in [0, 1]$ is a hyper-parameter that balances the contribution of the quality and the diversity of clients. This formulation lets the kernel encode diversity through $C^{(t)}$ and lets the additive quality term separately measure the individual utility of each client. 

When $\theta = 1$, the maximization problem reduces to the problem of finding the clients of top $m$ largest quality scores. If we choose the quality scores to be the local loss values, we recover the objective of \textsc{PowerOfChoice} in~\citep{AISTATS22_Cho}. When $\theta = 0$, we obtain the most diverse solution based on the $C^{(t)}$ kernel without any consideration of quality. Our formulation allows a flexible trade-off between diversity and quality. Furthermore, the quality scores can have multiple components. For example, if $\qualityattatn = (1-\gamma)\learnqualityattatn + \gamma \commqualityattatn$ with the additional $\gamma \in [0, 1]$, our formulation can trade off between communication-related and learning-related quality terms.

\subsection{MAP Inference of \texorpdfstring{$L$}{L}-ensemble DPP}\label{sec:MAP_DPP}
In this section, we discuss the approach to solve~\eqref{eqn:DPP:log_prob}. Applying standard DPP quality-diversity decompositions~\citep{FTML12_Kulesza}, the objective in~\eqref{eqn:DPP:log_prob} can be equivalently written as another DPP when $\theta \in [0, 1)$. The corresponding $L$-ensemble kernel is
\begin{align}\label{dpp:eqn:equi_dpp}
    L^{(t)} = Q^{(t)} C^{(t)} Q^{(t)},
\end{align}
where $Q^{(t)} \triangleq \diag(\exp(\alpha \qualityatt))$, $\qualityatt \triangleq [\qualityattatone, \dots, \qualityattatN]$ is the vector of quality scores, and $\alpha \triangleq \frac{\theta}{2(1-\theta)} \in [0, \infty)$.

In each round $t$, given our designed kernel $L^{(t)}$, we solve the following MAP inference to determine the most possible configuration of a subset of clients with cardinality $m$ 
\begin{align}\label{eqn:dpp:map}
    \max_{\cS^{(t)}: |\cS^{(t)}| = m} \; \log\det\left(L^{(t)}_{\cS^{(t)}}\right).
\end{align}
It is well-known that performing MAP inference of an $m$-DPP is NP-hard~\cite[Section 5.2.5]{FTML12_Kulesza}, but efficient heuristics are available in the literature~\citep{NeurIPS18_Chen,NeurIPS12_Gillenwater,ICML17_Han,NeurIPS22_Hemmi}. We choose the fast greedy MAP inference algorithm from~\citep{NeurIPS18_Chen} in our implementation. $\cS^{(t)}$ is initialized with the item $i$ that has the largest value of $L^{(t)}_{i,i}$. Then $\cS^{(t)}$ is updated until $m$ items are selected by adding an item $j$ in each iteration where 
\begin{align}
    j = \argmax_{i \in \cN/\cS^{(t)}} \;  \log\det\left(L^{(t)}_{\cS^{(t)} \cup \{i\}}\right) - \log\det\left(L^{(t)}_{\cS^{(t)}}\right),
\end{align}
i.e., $j$ has the largest gain of the log determinant for the new submatrix among all unselected items. Furthermore, by using Cholesky decomposition, we can avoid the brute-force calculation of the values of $\det(L^{(t)}_{\cS^{(t)} \cup \{i\}})$ in each iteration and achieve time complexity of $\cO(m^{2}N)$~\citep{NeurIPS18_Chen}.

% The marginal gain of log-probability can be determined by
% \begin{align}
%     \log P(\cS^{(t)} \cup \{i\}) - \log P(\cS^{(t)}) \sim (1-\theta)(\log\det(C_{\cS^{(t)}}^{(t)} \cup \{i\}) - \log\det(C_{\cS^{(t)}}^{(t)})) + \theta \qualityattati.
% \end{align}

% Note that when $m=1$, the optimization problem reduced to find the client with the largest squared norm of its local updates $\|g_{n}^{(t)}\|^{2}$. However, when $m > 1$, the MAP inference from $m$-DPPs will not na\"ively output the clients with top $m$ largest squared norm of its local updates but will take into account both diversity and quality.
\subsection{Overall~\textsc{ADCS} Algorithm}
The $L$-ensemble kernel construction step described above requires the local gradients from all clients. In practical implementation, this step does not need to be performed in every round, since the gradient directions do not change drastically over consecutive rounds. We use a tunable parameter $R$ to specify the number of rounds for each update of $L^{(t)}$.\footnote{Compressed approximations of $g_{n}^{(t)}$ may also be applied to further reduce the communication overhead while preserving the accuracy of estimating their inner products (e.g.,~\citep{KDD06_Li}). However, that is outside the scope of this paper.} Then, in each training round t, the server applies the MAP inference algorithm on the current kernel matrix $L^{(t)}$ to obtain the scheduled clients $\cS^{(t)}$. Finally we perform model broadcasting, local training, and aggregation over local updates. The pseudocode of the overall algorithm is shown in Algorithm~\ref{alg:ADCS}. 

The total extra computational complexity of \textsc{ADCS} on the server is $\cO((N^{2}d+m^{2}N)T/R)$, which includes both the construction and MAP inference of the kernel. Since the construction and MAP inference of the kernel are both performed at the server, which often has sufficient computational capabilities, the extra computation time for \textsc{ADCS} is negligible. The exact client-side computation is $mT+(N-m)\lceil T/R\rceil$ updates, where $mT$ is the usual computation overhead and $(N-m)\lceil T/R\rceil$ is the extra computation for ADCS. With $R=20,N=30,m=18$ in our experiments, the extra client-side computation overhead for ADCS is only $3.3\%$, which is negligible.

\begin{algorithm}[t]
\renewcommand{\algorithmicrequire}{\textbf{Input:}}
\renewcommand{\algorithmicensure}{\textbf{Output:}}
\caption{Adaptive Determinantal Client Scheduling (\textsc{ADCS})}\label{alg:ADCS}
\begin{algorithmic}[1]
\REQUIRE initial model $\model^{(0)}$, hyper-parameter $\theta$, weight vector $\weight$, server learning rate $\eta_{s}$, total number of rounds $T$, and number of selected clients $m$.
\ENSURE $\hat{\model} = \{\modelatt\}_{t=1}^{T}$.
% \STATE Server broadcasts $\model^{(0)}$ to all the $N$ clients.
% \FOR{each client $n \in[N]$}
%     \STATE  Client $n$ computes local update $g_{n}^{(0)}$.
%     \STATE  Client $n$ send local update $g_{n}^{(0)}$ to the server.
% \ENDFOR
% \STATE Server constructs the kernel matrix $L^{(1)}$.
\FOR{each round $t = 0, \dots, T-1$}
    % \STATE Server solves~\eqref{eqn:dpp:map} to get $\cS^{(t)}$ using the kernel matrix $L^{(1)}$.
    \IF{$t \bmod R = 0$}
        \STATE Server constructs a new kernel matrix $L^{(t)}$.
    \ELSE
        \STATE The kernel matrix stays unchanged $L^{(t)} = L^{(t-1)}$.
    \ENDIF
    \STATE Server solves the maximization problem of~\eqref{eqn:DPP:log_prob} to get $\cS^{(t)}$.
    \STATE Server samples clients $\cS^{(t)} \subseteq \cN$.
    % \STATE Server samples clients $\cS^{(t)} \subseteq \cN$ by solving a $m$-DPP.
    \STATE Server broadcasts $\modelatt$ to $\cS^{(t)}$.
    \FOR{each client $n \in \cS^{(t)}$}
        \STATE  Client $n$ computes local update $g_{n}^{(t)}$.
        \STATE  Client $n$ sends local update $g_{n}^{(t)}$ to the server.
    \ENDFOR
    \STATE Server constructs global update $\bar{g}^{(t)}_{\cS^{(t)}} = \frac{1}{m}\sum_{n \in \cS^{(t)}}\weightatn g_{n}^{(t)}$. 
    \STATE Server updates the global model via $\modelattplusone = \modelatt - \eta_{s} \bar{g}^{(t)}_{\cS^{(t)}}$.  
    % \STATE Server updates the kernel matrix $L^{(t+1)}$.
    % \STATE Server updates the kernel matrix $L^{(t+1)}$ via~\eqref{eqn:L:update}.
\ENDFOR   
\end{algorithmic}
\end{algorithm}

\section{FL Convergence under \textsc{ADCS}}
\label{sec:nonconvex_dpp_fedavg}
Since client scheduling with diversity consideration typically leads to biased aggregated gradients, i.e., $\mathbb{E}[\bar{g}^{(t)}_{\cS^{(t)}}] \neq \bar{g}^{(t)}_{\cN}$, the standard proof of convergence analysis of FL algorithms with unbiased gradient approximation does not apply~\citep{ICLR19_Stich, ICLR20B_Li}. In the following, we first quantify the gradient approximation error $\|\bar{g}^{(t)}_{\cS^{(t)}} - \bar{g}^{(t)}_{\cN}\|$ for \textsc{ADCS}, and then provide a novel convergence bound for general biased client selection in FL for non-convex loss functions. For improved readability of our derivation, we assume a uniform weight vector $\weight$. Our analysis can be directly extended to general $\weight$ by replacing all $1/N$, $1/m$, and count-based cluster terms by weighted population, selection, and cluster-mass terms.

\subsection{Gradient Approximation Bias of \textsc{ADCS}}
\label{DPP:section:gradient_approximation_error}
For this theoretical analysis, we consider the idealized scenario of $R=1$. However, in Section~\ref{DPP:sec:experiments}, we will show that \textsc{ADCS} retains its performance advantage for a wide range of $R$ values. We omit the time index $t$ since the following analysis holds for any $t$. Let $g_1,\dots,g_N \in \mathbb{R}^d$ be the client gradients. We further define $G=[g_1,\dots,g_N]\in\mathbb{R}^{d\times N}$ and $L=G^\top G$ such that $L_{ij}=\langle g_i,g_j\rangle$. Let $S^\star$ be the selected subset, of size $m$, from any client selection algorithm. Let $\bs\in\{0,1\}^N$ denote the indicator vector of $S^\star$, so that $\mathbf{1}^\top \bs=m$, where $\mathbf{1}\in\mathbb{R}^N$ is the all-ones vector. Define the all-client average by $\bar g_N=\frac1N\sum_{i=1}^N g_i=\frac1N G\mathbf{1}$ and the selected average by $\bar g_S=\frac1m\sum_{i\in S^\star} g_i=\frac1m G\bs$.

We consider a general cluster structure for the client gradients. Suppose $g_1,\dots,g_N$ are partitioned into $K$ clusters $C_1, ..., C_K$, i.e., $[N]=\cup_{k=1}^K  C_k$, and $n_k =|C_k|$. Let $z_i$ denote the cluster assignment of client $i$, such that $z_i=k$ if and only if $i\in C_k$. Let $c_k$ be a representation vector for cluster $C_k$. In this analysis, we allow any general $c_k$, e.g., it may be the centroid of $C_k$. Let $\kappa \triangleq \min_{1\le k\le K}\|c_k\|_2$. We define
\begin{align}
r \triangleq \max_{i\in[N]} \| g_i - c_{z_i} \|_2, \label{eqn:intra-cluster-concentration}
\end{align}
and
\begin{align}
\mu \triangleq \max_{k\neq \ell} \left|
\frac{c_k^\top c_\ell}{\|c_k\|_2\|c_\ell\|_2}
\right|.  \label{eqn:inter-cluster-separation}
\end{align}
Thus, $r$ represents the intra-cluster concentration, and $\mu$ represents the inter-cluster separation. We further define quality-related terms $w_i \triangleq e^{2\alpha q_i}$, $\bar w \triangleq \max_{1\le i\le N} w_i$, $\bar w_k \triangleq \max_{i\in \mathcal C_k} w_i$, and $Q_m
\triangleq
\max_{\substack{T\subseteq[K]\\ |T|=m}}
\prod_{k\in T}\bar w_k$.

\begin{assumption}[Cluster structure of gradients]\label{dpp:assumption:cluster}
We assume
\begin{align}
\mu < \frac{1}{m-1}, 
% \mu < \frac{1}{N}, 
\label{eqn:assumption:inter-cluster-separation}
\end{align}
i.e., the clusters are well separated, and
\begin{align}
 r  < \min \left(\kappa, r_\star, \frac{\kappa}{2}\sqrt{\frac{1-(m-1)\mu}{m}}\right), 
 \label{eqn:assumption:intra-cluster-concentration}
\end{align}
where $r_\star$ is the smallest $r \geq 0$ such that $Q_m \rho^{2m}-16r^2\bar w^{\,m}/\kappa^2 = 0$ and $\rho \triangleq
\sqrt{1-(m-1)\mu} - 2r\sqrt{m}/\kappa > 0$. Condition~\eqref{eqn:assumption:intra-cluster-concentration} ensures sufficient intra-cluster concentration.
\end{assumption}

Let $S_{\text{ADCS}}^\star$ be the set of clients chosen by \textsc{ADCS}. Define the aggregated gradients $\bar g_{S_{\text{ADCS}}^\star}\triangleq\frac1m\sum_{i\in S_{\text{ADCS}}^\star} g_i$ and selected cluster subset $T_{\text{ADCS}}^\star \triangleq \{z_i:i\in S_{\text{ADCS}}^\star\}$.

\begin{theorem}[\textsc{ADCS} bias bound]
\label{dpp:thm:cluster_adaDPP}
With Assumption~\ref{dpp:assumption:cluster} and further assuming $m \leq K$, we have
\begin{align}
\left\|
\bar g_{S_{\text{ADCS}}^\star}-\bar g_N
\right\|_2
\le
\left\|
\sum_{k=1}^K
\left(
\frac{\mathbf{1}\{k\in T_{\text{ADCS}}^\star\}}{m}
-
\frac{n_k}{N}
\right)c_k
\right\|_2
+2r.
\label{dpp:eqn:thm1results}
\end{align}
\end{theorem}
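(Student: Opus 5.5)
The plan is to split the argument into a combinatorial part and an algebraic part. The combinatorial part shows that, under Assumption~\ref{dpp:assumption:cluster} and $m\le K$, the set $S_{\text{ADCS}}^\star$ contains at most one client from each cluster. Since $|S_{\text{ADCS}}^\star|=m$, it then contains exactly one client from each of $m$ distinct clusters, so $|T_{\text{ADCS}}^\star|=m$. Given this, the bound \eqref{dpp:eqn:thm1results} is a short triangle-inequality computation.

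For the algebraic part, write $g_i=c_{z_i}+e_i$ with $\|e_i\|_2\le r$ by \eqref{eqn:intra-cluster-concentration}. Because each selected cluster contributes exactly one client,
\begin{align*}
\bar g_{S_{\text{ADCS}}^\star}=\frac1m\sum_{k\in T_{\text{ADCS}}^\star}c_k+\frac1m\sum_{i\in S_{\text{ADCS}}^\star}e_i,
\qquad
\bar g_N=\sum_{k=1}^K\frac{n_k}{N}c_k+\frac1N\sum_{i=1}^N e_i .
\end{align*}
Subtracting gives the cluster term in \eqref{dpp:eqn:thm1results}. The two error averages each have norm at most $r$, which gives the $+2r$.

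The main obstacle is showing that the greedy MAP procedure never picks two clients from the same cluster. The kernel it uses is $L=QCQ$, whose columns are the vectors $v_i=\sqrt{w_i}\,\phi_i$, so every determinant below is a squared volume spanned by these $v_i$. I would argue by induction over the greedy steps, using the Schur-complement form of the marginal gain. Suppose the current set $S$ has $j$ clients from distinct clusters. Adding $i$ multiplies $\det(L_S)$ by $\mathrm{dist}^2(v_i,\mathrm{span}\{v_s:s\in S\})$, and I would bound this factor separately for the two kinds of candidate.

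\emph{Same-cluster candidate.} If $i$ shares a cluster with some $s\in S$, then the distance is at most $\sqrt{w_i}\,\|\phi_i-\phi_s\|$ plus a correction for the differing weights. Since $\|g_i-g_s\|\le 2r$ and $\|g\|\ge\kappa-r$, we have $\|\phi_i-\phi_s\|\lesssim 4r/\kappa$. Hence the factor is at most about $16r^2\bar w/\kappa^2$. If the weights differ, I would instead bound the determinant of the full candidate set directly through the pair $(i,s)$.

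\emph{New-cluster candidate.} For $S\cup\{i\}$ with all clusters distinct, the normalized Gram matrix of the $\phi$'s is within $O(r\sqrt m/\kappa)$ of the Gram matrix of the normalized representatives $c_k/\|c_k\|$. By Gershgorin and \eqref{eqn:assumption:inter-cluster-separation}, the latter has smallest eigenvalue at least $1-(m-1)\mu$. Weyl's inequality then yields smallest singular value at least $\rho$, and therefore a distance factor at least $w_i\rho^2$.

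To compare the two cases, I would work with whole determinants rather than single steps. Any size-$m$ set with a repeated cluster has $\det(L_S)\le 16r^2\bar w^m/\kappa^2$. The best distinct-cluster set has $\det\ge Q_m\rho^{2m}$. The choice $r<r_\star$ is precisely what makes the second quantity exceed the first. The remaining hardest point is to ensure the greedy procedure, rather than only the exact MAP solution, benefits from this separation. I would try to show that, at every step with $j<m\le K$ and at least one unused cluster available, the best new-cluster gain dominates every same-cluster gain. If that cannot be made rigorous uniformly, I would fall back to stating the argument for the exact MAP solution of \eqref{eqn:dpp:map} and treat the greedy procedure as achieving it under these separation conditions.
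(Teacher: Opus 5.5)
Your fallback argument is essentially the paper's proof. The paper shows $\det(L_S)\le 16r^2\bar w^{m}/\kappa^2$ for any $m$-set with a repeated cluster, using a column operation plus Hadamard, which is equivalent to your volume/Schur argument. It then shows $\det(L_{S^\sharp})\ge Q_m\rho^{2m}$, via Gershgorin and Weyl, for the heaviest clients of a $Q_m$-maximizing set of $m$ clusters. It compares the two via $r<r_\star$ and finishes with the same triangle inequality. Like your fallback, it covers only exact MAP maximizers.

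One step needs tightening to match the threshold $r_\star$ exactly. Bounding $\|\phi_i-\phi_s\|$ directly through $\|g_s\|\ge\kappa-r$ only gives $4r/(\kappa-r)$. The assumption $Q_m\rho^{2m}>16r^2\bar w^m/\kappa^2$ does not imply the analogous inequality with $(\kappa-r)^2$ in the denominator. Route through the normalized centers $u_k=c_k/\|c_k\|_2$ instead. Applying $\bigl\|\frac{a}{\|a\|}-\frac{b}{\|b\|}\bigr\|\le\frac{2\|a-b\|}{\|b\|}$ with $a=g_i$, $b=c_k$ gives $\|\phi_i-u_k\|\le 2r/\kappa$. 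This is also the per-column bound your Weyl step needs, and the triangle inequality then gives exactly $\|\phi_i-\phi_s\|\le 4r/\kappa$. No correction for differing weights is needed: $\sqrt{w_i}\,\phi_s\in\mathrm{span}\{v_s\}$, so $\mathrm{dist}(v_i,\mathrm{span}\{v_s\})\le\sqrt{w_i}\,\|\phi_i-\phi_s\|$ regardless of $w_s$.

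The greedy step you left open does close under the same assumption. This gives a slightly stronger statement than the paper's: its $S^\star_{\text{ADCS}}$ is nominally the greedy output, but its proof covers only exact MAP.

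Suppose greedy has picked $S_j$, $1\le j<m$, from distinct clusters $T_j$.

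\emph{Same-cluster candidates.} A candidate $i$ sharing a cluster with some $s\in S_j$ has gain factor $\det(L_{S_j\cup\{i\}})/\det(L_{S_j})=\mathrm{dist}^2(v_i,\mathrm{span}\,v_{S_j})\le w_i\|\phi_i-\phi_s\|^2\le 16r^2\bar w/\kappa^2$.

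\emph{New-cluster candidates.} Let $w^\dagger=\max_{k\notin T_j}\bar w_k$, attained by a client $i^\dagger$ of an unused cluster; one exists since $j<m\le K$. The set $S_j\cup\{i^\dagger\}$ has at most $m$ clients from distinct clusters, so the smallest singular value of their normalized vectors is at least $\rho$. Hence the gain factor of $i^\dagger$ is at least $w^\dagger\rho^2$.

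\emph{Comparison.} Since $|T^\sharp|=m>|T_j|$, some $k_0\in T^\sharp\setminus T_j$ has $\bar w_{k_0}\le w^\dagger$, so $Q_m\le w^\dagger\bar w^{m-1}$. Combining this with $Q_m\rho^{2m}>16r^2\bar w^m/\kappa^2$ and $0<\rho\le 1$ gives $w^\dagger\rho^2\ge w^\dagger\rho^{2m}>16r^2\bar w/\kappa^2$.

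So at every step some new-cluster candidate strictly beats every same-cluster candidate. The greedy output therefore also satisfies $|T^\star_{\text{ADCS}}|=m$, and the bound holds for the actual ADCS selection.
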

\begin{proof}
The key idea is to show that every exact MAP solution $S_{\text{ADCS}}^\star$ contains clients from $m$ distinct clusters, i.e.
$\big|\{z_i:i\in S_{\text{ADCS}}^\star\}\big|=m$. The details are deferred to Appendix~\ref{dpp:appendix:thm1}. 
\qedhere
\end{proof}

We remark that for any subset of clients of size $m$, the discrepancy between the selected average and the full average is upper bounded as $\|\bar g_S-\bar g_N\|_2
\le
\sqrt{\lambda_{\max}(L)\left(\frac1m-\frac1N\right)}$,
where $\lambda_{\max}(L) \geq 0$ is the largest eigenvalue of $L$. This can be shown by letting $\bv=\frac{1}{m}\bs-\frac{1}{N}\mathbf{1}$ and observing that
\begin{align}
\|\bar g_S-\bar g_N\|_2^2=\bv^\top L\bv
\leq \lambda_{\max}(L)\|\bv\|_2^2 = \lambda_{\max}(L)\left(\frac1m-\frac1N\right),
\end{align}
where the inequality is from the Rayleigh quotient inequality. In the following corollary, we show that the bound in Theorem~\ref{dpp:thm:cluster_adaDPP} is tighter than the general bound under the following strong cluster setting: (1) $r=0$, (2) \(c_1,\dots,c_K\in\mathbb{R}^d\) are orthonormal, i.e., $c_k^\top c_\ell = \mathbf{1}\{k=\ell\}$, and (3) the clusters are balanced, i.e., $n_k = \frac{N}{K}$, for all $k\in[K]$. The proof is given in Appendix~\ref{dpp:appendix:cor1}.

\begin{corollary}
\label{dpp:corollary:comparison}
Under the strong cluster setting, we have
\begin{align}
\|\bar g_{S_{\text{ADCS}}^\star} - \bar g_N\|_2
\le
\sqrt{\frac{K-m}{Km}}.
\end{align}
\end{corollary}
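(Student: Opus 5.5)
We need to prove the corollary: under the strong cluster setting, $\|\bar g_{S_{\text{ADCS}}^\star}-\bar g_N\|_2\le\sqrt{(K-m)/(Km)}$. The plan is to specialize Theorem~\ref{dpp:thm:cluster_adaDPP} and then evaluate the resulting norm exactly, using orthonormality of the $c_k$.

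\textbf{Step 1: check the hypotheses.} With $r=0$ and orthonormal $c_k$ we have $\mu=0$ and $\kappa=1$.
\begin{itemize}
\item Condition~\eqref{eqn:assumption:inter-cluster-separation} becomes $0<1/(m-1)$, which holds.
\item In condition~\eqref{eqn:assumption:intra-cluster-concentration}, the last two quantities are $\kappa=1>0$ and $\frac{\kappa}{2}\sqrt{1/m}>0$.
\item For $r_\star$: at $r=0$ we have $\rho=1$, so the defining expression equals $Q_m>0$. Since $r_\star$ is the smallest root, $r_\star>0$ (assuming a root exists; otherwise we read the constraint as vacuous).
\end{itemize}
Hence $r=0$ satisfies the strict inequality, and Assumption~\ref{dpp:assumption:cluster} holds. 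We also keep the standing assumption $m\le K$, which is implicit since the bound requires $K\ge m$.

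\textbf{Step 2: apply the theorem.} With $r=0$, Theorem~\ref{dpp:thm:cluster_adaDPP} gives
\begin{align*}
\|\bar g_{S_{\text{ADCS}}^\star}-\bar g_N\|_2\le\Big\|\sum_{k=1}^K\Big(\tfrac{\mathbf{1}\{k\in T^\star_{\text{ADCS}}\}}{m}-\tfrac1K\Big)c_k\Big\|_2 ,
\end{align*}
using $n_k/N=1/K$. The proof of the theorem shows that the MAP solution picks $m$ distinct clusters, so $|T^\star_{\text{ADCS}}|=m$.

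\textbf{Step 3: evaluate the norm.} By orthonormality, the squared norm is the sum of squared coefficients:
\begin{align*}
m\Big(\tfrac1m-\tfrac1K\Big)^2+(K-m)\tfrac1{K^2}=\tfrac{(K-m)^2}{mK^2}+\tfrac{K-m}{K^2}=\tfrac{(K-m)K}{mK^2}=\tfrac{K-m}{Km}.
\end{align*}
Taking square roots gives the claimed bound.

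\textbf{Comparison with the generic bound.} Here $L$ is block-constant, with $K$ all-ones blocks of size $N/K$, so $\lambda_{\max}(L)=N/K$. The generic Rayleigh bound is then $\sqrt{(N/K)(1/m-1/N)}=\sqrt{(N-m)/(Km)}$, which is at least $\sqrt{(K-m)/(Km)}$ because $N\ge K$.

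The only delicate point is Step 1, specifically verifying $r_\star>0$ from its implicit definition. The rest is direct calculation.
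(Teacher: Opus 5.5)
Your proposal is correct and follows essentially the same route as the paper. You specialize Theorem~\ref{dpp:thm:cluster_adaDPP} with $r=0$, $\mu=0$, $\kappa=1$, $\rho=1$, so the needed strict inequality reduces to $Q_m>0$. You then use that the MAP solution covers $m$ distinct clusters, and evaluate the squared norm by orthonormality as $m(\frac1m-\frac1K)^2+(K-m)\frac1{K^2}=\frac1m-\frac1K$. Your verification that $r_\star>0$ through Assumption~\ref{dpp:assumption:cluster} is only a more explicit form of the paper's direct check of $Q_m\rho^{2m}>16r^2\bar w^{\,m}/\kappa^2$ at $r=0$.
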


In the same cluster setting, we can also further quantify the general bound. Since $G=[g_1,\dots,g_N]$ and $g_i=c_{z_i}$, we have $GG^\top = \sum_{i=1}^N g_i g_i^\top = \sum_{k=1}^K n_k\, c_k c_k^\top = \frac{N}{K}\sum_{k=1}^K c_k c_k^\top$. Because \(c_1,\dots,c_K\) are orthonormal, \(\sum_{k=1}^K c_k c_k^\top\) is the orthogonal projector onto the span of $c_1,\dots,c_K$, whose largest eigenvalue is \(1\). We have $\lambda_{\max}(GG^\top)=N/K$. Since \(L=G^\top G\) and \(GG^\top\) have the same nonzero eigenvalues, $\lambda_{\max}(L)=N/K$. The general bound above becomes 
\begin{align}
\|\bar g_{S_{\text{ADCS}}^\star} - \bar g_N\|_2
\le
\sqrt{\lambda_{\max}(L)\left(\frac{1}{m}-\frac{1}{N}\right)}
=
\sqrt{\frac{N-m}{Km}}.
\end{align}
Since $N > K$ in general, the bound for \textsc{ADCS} is strictly tighter.

\subsection{FL Convergence Bound}
Since adding diversity consideration to client scheduling can lead to bias as shown in~\eqref{dpp:eqn:thm1results}, we provide a novel convergence analysis for client scheduling with selection bias in this section. Let $\mathcal{F}_t$ be the sigma-field generated by all randomness up to and including the start of round $t$. Let $f^\star\triangleq\inf_{\model} f(\model)>-\infty$. We make the following assumptions, which are common in the literature on FL analysis.
\begin{assumption}[Smoothness]\label{DPP:assumption:smoothness}
Each $f_i$ is $L$-smooth, i.e., there exists a positive $L$ such that 
\begin{align}
\|\nabla f_{i}(\model_1) - \nabla f_{i}(\model_2) \| \leq L\|\model_1- \model_2\|,
\end{align}
holds $\forall \model_{1}, \model_{2} \in \modelset$ and $\forall i \in [N]$. Hence $f$ is also $L$-smooth.
\end{assumption}

\begin{assumption}[Unbiased Local Stochastic Gradients]\label{DPP:assumption:unbiasedness}
For all $i,t,e$, $\mathbb{E}[g^{(t,e)}_{i}\mid \model^{(t,e)}_{i}]
= \nabla f_i(\model^{(t,e)}_{i})$.
\end{assumption}

\begin{assumption}[Bounded Second Moment]\label{DPP:assumption:bounded2moment}
There exists $G>0$ such that for all $i,t,e$, $\mathbb{E}[\|g^{(t,e)}_{i}\|^2\mid \mathcal{F}_t]\le G^2$.
\end{assumption}

\begin{assumption}[Bounded Per-client Variance]\label{DPP:assumption:variance}
There exists $\sigma^2\ge 0$ such that for all $i,t,e$, $\mathbb{E}[\|g^{(t,e)}_{i}-\nabla F_i(\model^{(t,e)}_{i})\|^2 \mid \mathcal{F}_t]\le \sigma^2$.
\end{assumption}

Define the expected client selection bias $b_t \triangleq \mathbb{E}[\bar g_t\mid \mathcal{F}_t] - \nabla f(\modelatt)$. Let $\epsilon$ be an upper bound on $\|b_t\|$ for all $t$. When the local gradient update uses the entire local dataset, $\nabla f(\modelatt)$ is the same as $\bar g_N$ in round $t$ in the LHS of~\eqref{dpp:eqn:thm1results}. Since for \textsc{ADCS}, $\mathbb{E}[\bar g_t\mid \mathcal{F}_t] = \bar g_t$, $\epsilon$ can be set as the RHS of~\eqref{dpp:eqn:thm1results} if the cluster assumption holds. Define the zero-mean selection deviation $\zeta_t \triangleq \bar g_t - \mathbb{E}[\bar g_t\mid \mathcal{F}_t]$. Let $\nu_{\rm sel}$ be an upper bound on the selection randomness variance $\mathbb{E}[\|\zeta_t\|^2\mid \mathcal{F}_t]$ for all $t$. We have the following theorem that captures biased client selection, local client drift, gradient approximation error, and stochastic gradients.

\begin{theorem}
\label{thm:nonconvex_dpp_fedavg}
Assume Assumptions~\ref{DPP:assumption:smoothness}-\ref{DPP:assumption:variance}. Let $\eta \triangleq E\eta_\ell$ and suppose the round stepsize satisfies $\eta \le \frac{1}{8L}$. Then for any $T\ge 1$,
\begin{align}
\frac{1}{T}\sum_{t=0}^{T-1}\mathbb{E}\left[\|\nabla f(\modelatt)\|^2\right] & \le
\underbrace{\frac{4\big(f(\model_0)-f^\star\big)}{\eta T}}_{\text{optimization error}}
+ \underbrace{5\varepsilon^2}_{\text{gradient appro. error}} \nonumber \\
& 
\quad + \underbrace{16L\eta\left(\nu_{\rm sel} + \frac{\sigma^2}{mE}\right)}_{\text{variance terms}}
+ \underbrace{\frac{5}{6}L^2\eta_\ell^2 G^2 (E-1)(2E-1)}_{\text{client drift}}.
\label{eq:final_bound}
\end{align}
\end{theorem}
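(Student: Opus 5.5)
The argument is the standard one-round descent analysis for local SGD, with the aggregated update split into a bias part, a selection-noise part, and a stochastic-gradient part. Write the round update as $\modelattplusone=\modelatt-\eta\,\bar g_t$, with the per-client update normalised by $E$ (the average of $E$ local stochastic gradients), so that $\eta=E\eta_\ell$. Decompose
\begin{align*}
\bar g_t=\nabla f(\modelatt)+b_t+\zeta_t,
\end{align*}
where $b_t$ is the selection bias, bounded by $\|b_t\|\le\varepsilon$, and $\zeta_t$ has zero conditional mean. When $b_t$ is defined through the mean of the true gradients at the local iterates, a separate drift term
\begin{align*}
\delta_t=\frac1m\sum_{i\in\cS^{(t)}}\frac1E\sum_{e}\bigl(\nabla f_i(\model_i^{(t,e)})-\nabla f_i(\modelatt)\bigr)
\end{align*}
also appears, together with a stochastic-noise term $\xi_t$ that is zero-mean by Assumption~\ref{DPP:assumption:unbiasedness}. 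L-smoothness (Assumption~\ref{DPP:assumption:smoothness}) gives
\begin{align*}
\mathbb{E}[f(\modelattplusone)\mid\mathcal F_t]\le f(\modelatt)-\eta\langle\nabla f,\mathbb{E}\bar g_t\rangle+\tfrac{L\eta^2}{2}\mathbb{E}\|\bar g_t\|^2 .
\end{align*}

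\emph{Inner-product term.} Use $-\langle a,a+b\rangle\le-\tfrac12\|a\|^2+\tfrac12\|b\|^2$ with $a=\nabla f(\modelatt)$ and $b=b_t+\mathbb{E}\delta_t$, then split $\|b\|^2\le 2\varepsilon^2+2\mathbb{E}\|\delta_t\|^2$ by Young's inequality.

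\emph{Second-moment term.} Bound $\mathbb{E}\|\bar g_t\|^2\le 4\|\nabla f\|^2+4\varepsilon^2+4\nu_{\rm sel}+4\mathbb{E}\|\xi_t\|^2+\dots$.
\begin{itemize}
\item The noise satisfies $\mathbb{E}\|\xi_t\|^2\le\sigma^2/(mE)$, because the $mE$ stochastic-gradient errors are martingale differences (Assumption~\ref{DPP:assumption:variance}) and their cross terms vanish.
\item Since $\eta\le 1/(8L)$, the quantity $2L\eta^2\|\nabla f\|^2$ is at most $\eta\|\nabla f\|^2/4$. After this absorption, roughly $\tfrac{\eta}{4}\|\nabla f\|^2$ of descent remains, which is where the factor $4$ in front of $(f(\model_0)-f^\star)/(\eta T)$ comes from.
\item The variance contributions appear as $L\eta^2\cdot(\text{const})(\nu_{\rm sel}+\sigma^2/(mE))$. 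Dividing by $\eta/4$ gives the $16L\eta$ coefficient.
\end{itemize}

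\emph{Drift.} By smoothness, $\|\nabla f_i(\model_i^{(t,e)})-\nabla f_i(\modelatt)\|\le L\|\model_i^{(t,e)}-\modelatt\|$. Bounded second moments (Assumption~\ref{DPP:assumption:bounded2moment}) with Jensen's inequality give
\begin{align*}
\mathbb{E}\|\model_i^{(t,e)}-\modelatt\|^2\le \eta_\ell^2(e-1)^2G^2
\end{align*}
when $e-1$ local steps have been taken. Averaging over $e=1,\dots,E$ uses $\sum_{e=1}^{E}(e-1)^2=(E-1)E(2E-1)/6$, so
\begin{align*}
\mathbb{E}\|\delta_t\|^2\le L^2\eta_\ell^2G^2(E-1)(2E-1)/6 .
\end{align*}
This explains the $\tfrac{5}{6}$ factor: after multiplying by $4/\eta\cdot\eta/2$-type constants, the drift coefficient from the inner-product term and from the second-moment term combine to $5$, times $1/6$. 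The same combination gives $5\varepsilon^2$.

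\emph{Telescoping.} Rearrange to $\tfrac{\eta}{4}\mathbb{E}\|\nabla f(\modelatt)\|^2\le\mathbb{E}f(\modelatt)-\mathbb{E}f(\modelattplusone)+\eta(\dots)$, sum over $t=0,\dots,T-1$, use $f\ge f^\star$, and divide by $\eta T/4$.

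\textbf{Main obstacle.} The hardest step is the constant bookkeeping. The bias and drift enter both through the inner product and through the second moment, and the Young's-inequality splits must be chosen so the coefficients come out to exactly $5\varepsilon^2$ and $\tfrac56$. My first attempt would use the weights $\tfrac12$ in the inner-product split and a $4$-way Cauchy--Schwarz split in the second moment, then absorb the resulting $L\eta\le 1/8$ factors. If that gives slightly different constants, I would adjust the split (for example, a $5$-term split) until the stated bound is reached.
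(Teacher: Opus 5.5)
Your proposal is correct and is essentially the paper's proof. It uses the same decomposition $G_t=\nabla f(\modelatt)+b_t+d_t+\zeta_t+\xi_t$ (your $\delta_t$ is the paper's $d_t$), the same inner-product bound $-\tfrac{\eta}{2}\|\nabla f(\modelatt)\|^2+\eta\|b_t\|^2+\eta\,\mathbb{E}[\|d_t\|^2\mid\mathcal{F}_t]$, the same four-way split of the second moment with absorption via $\tfrac{\eta}{2}-2L\eta^2\ge\tfrac{\eta}{4}$ and $\eta+2L\eta^2\le\tfrac{5\eta}{4}$, the same drift bound, and the same telescoping.

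Your first attempt already gives the exact constants: the paper takes $u_t=\zeta_t+\xi_t$ as the fourth term and uses $\|u_t\|^2\le 2\|\zeta_t\|^2+2\|\xi_t\|^2$, which gives $16L\eta$. So do not switch to your five-term fallback split, which would leave only $\tfrac{3\eta}{16}$ of descent and break the leading factor $4$.
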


Our convergence analysis decomposes the aggregated update $\bar g_t$ into a population gradient term plus two selection-induced effects. Detailed proof is deferred to Appendix~\ref{dpp:appendix:thm2}. 
% $\bar g_t = \nabla f(w_t) + b_t + \zeta_t$. 

Theorem~\ref{thm:nonconvex_dpp_fedavg} shows that \textsc{ADCS} remains convergent despite its diversity-induced client selection bias. In particular, the \textsc{ADCS} bias is bounded by Section~\ref{DPP:section:gradient_approximation_error}, and for deterministic MAP selection we have $\nu_{\mathrm{sel}}=0$. Thus, \textsc{ADCS} enjoys the standard non-convex convergence rate $\cO(1/\sqrt{T})$, when $\eta$ is set to $\Theta(1/\sqrt{T})$, up to a residual term controlled by its bias.

\section{Experiments}\label{DPP:sec:experiments}
\begin{figure}[t]
\centering
\begin{subfigure}[b]{0.47\textwidth}
    \centering
    \includegraphics[width=\textwidth]{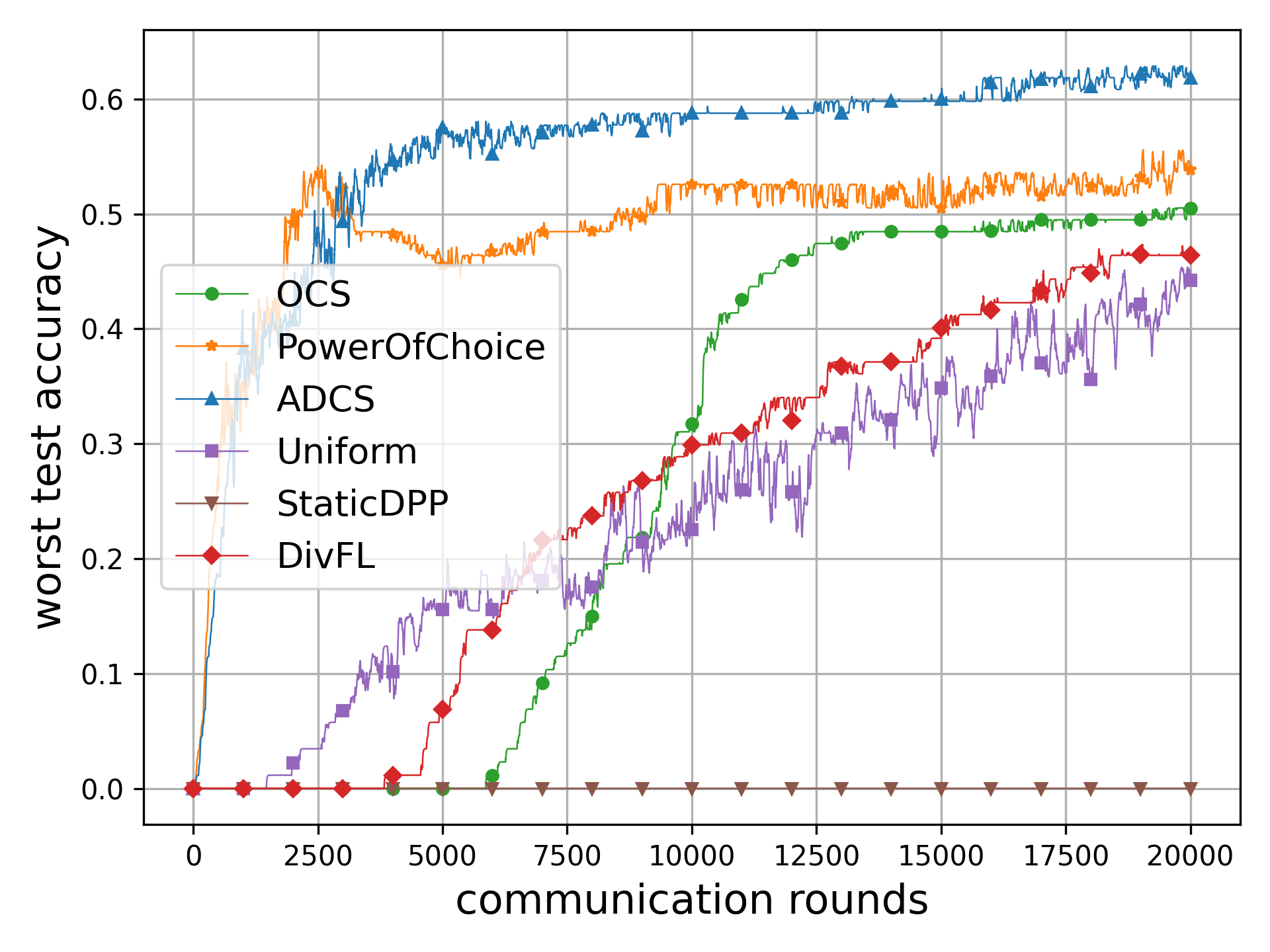}
\end{subfigure}
\begin{subfigure}[b]{0.47\textwidth}
    \centering
    \includegraphics[width=\textwidth]{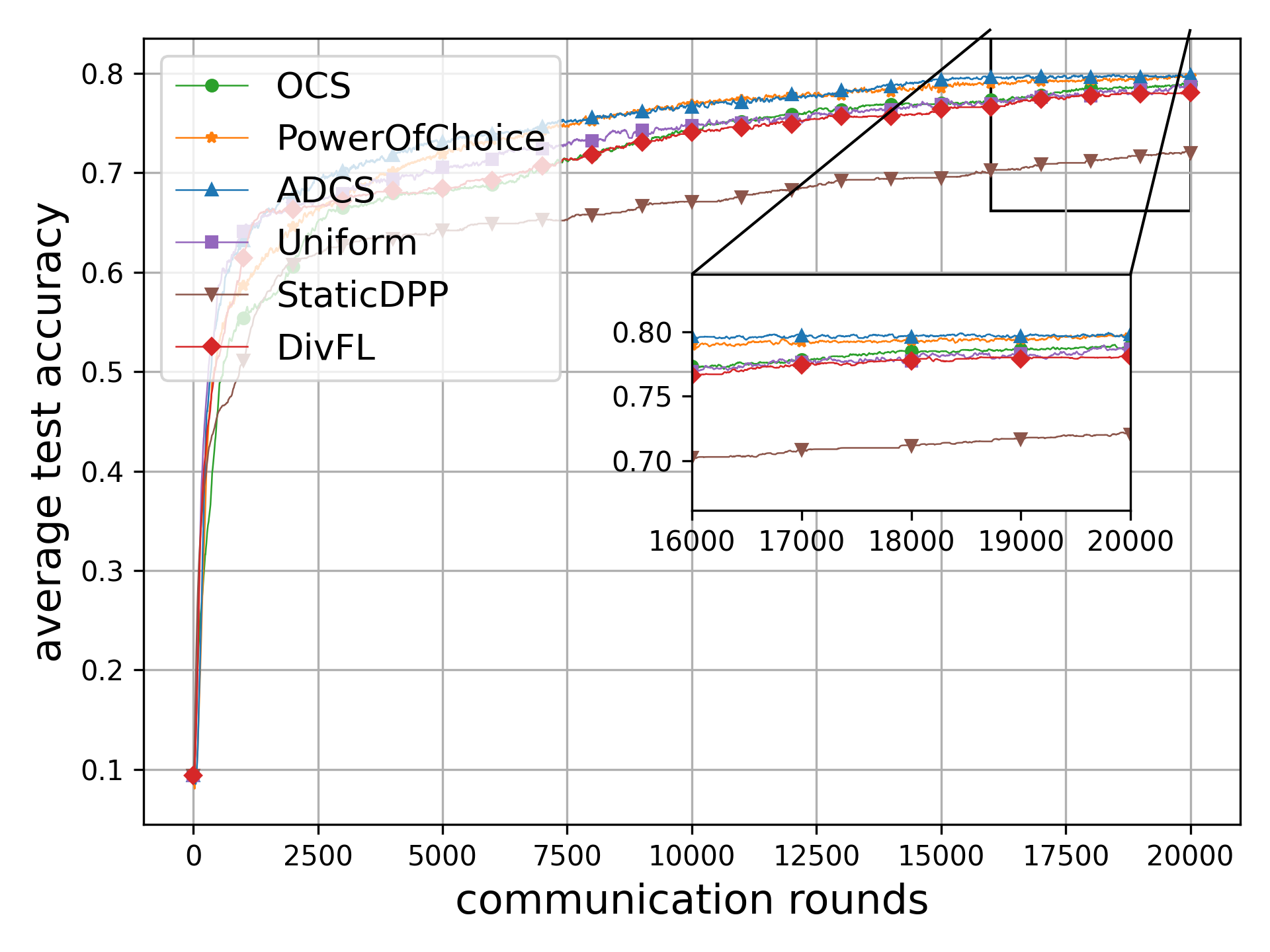}
\end{subfigure}
\caption{Comparison of worst and average test accuracies.}
\label{dpp:figure:comparison}
\end{figure}
\begin{figure}[t]
\centering
\begin{subfigure}[b]{0.47\textwidth}
    \centering
    \includegraphics[width=\textwidth]{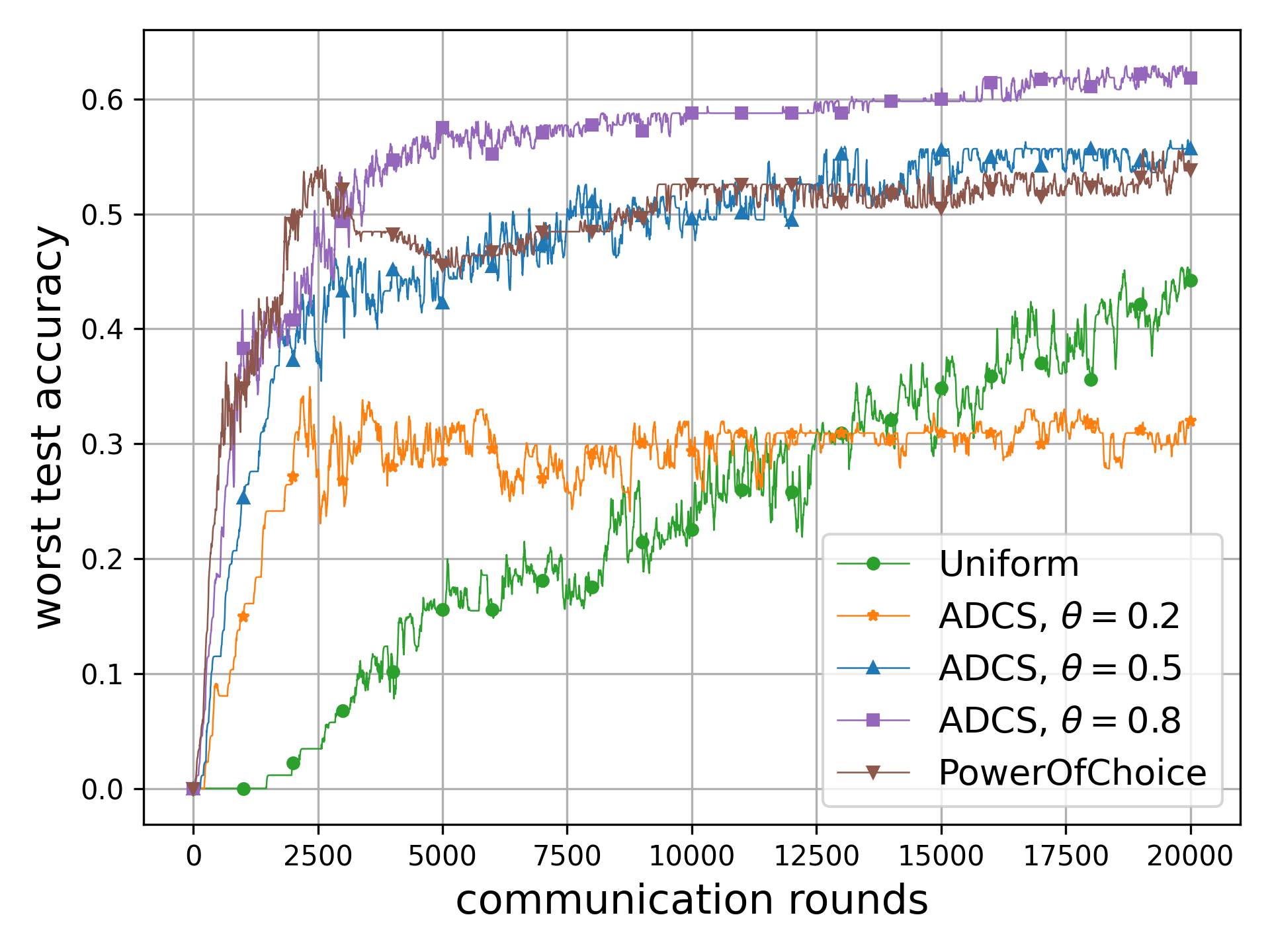}
\end{subfigure}
\begin{subfigure}[b]{0.47\textwidth}
    \centering
    \includegraphics[width=\textwidth]{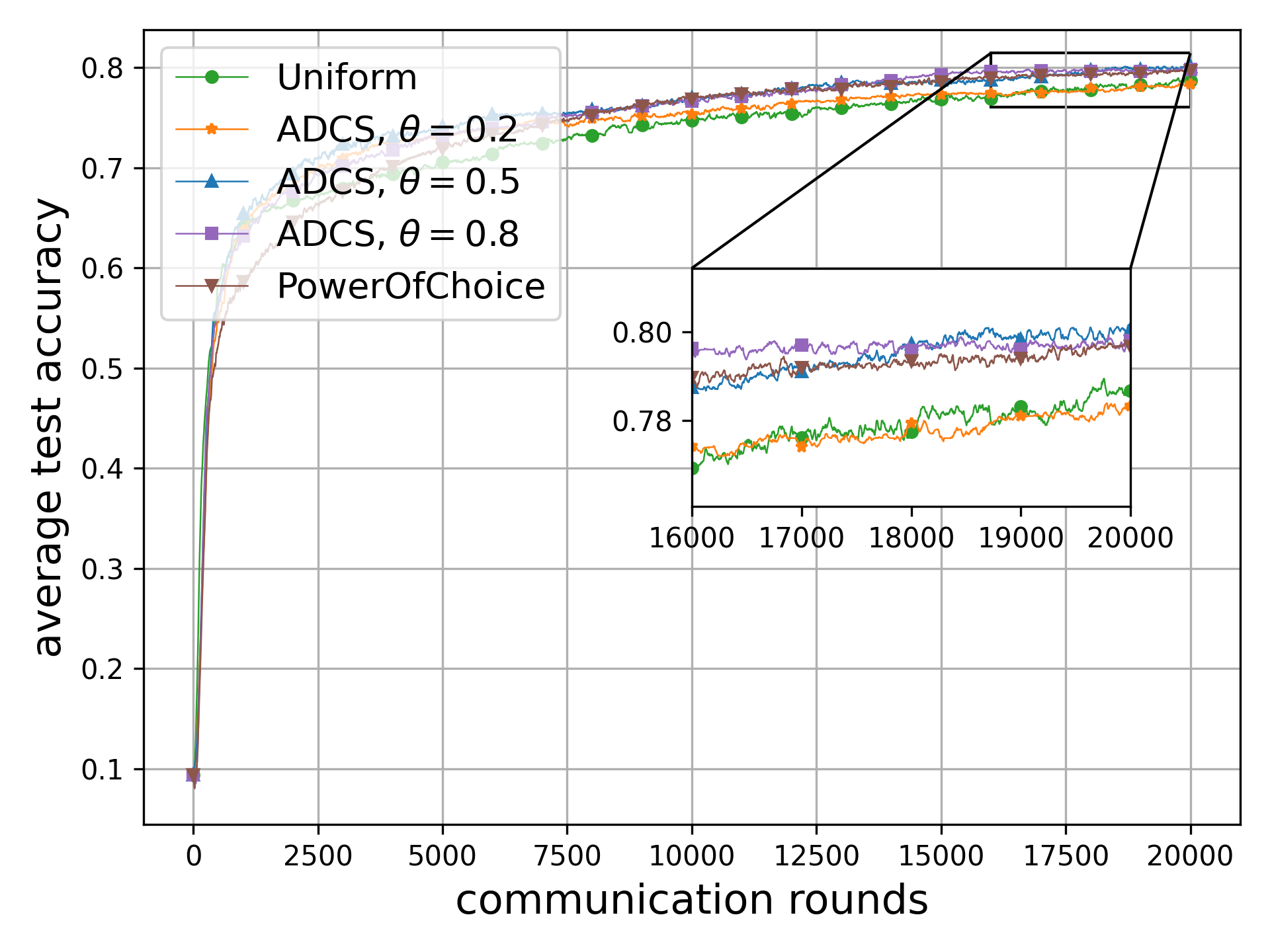}
\end{subfigure}
\caption{Worst and average test accuracies for varying $\theta$.}
\label{dpp:figure:comparison_theta}
\end{figure}

\begin{table*}[t]
  \caption{Ratio of Each Group of Clients Selected}
  \label{tab:selection_ratio}
  \centering
  \resizebox{\linewidth}{!}{%
  \begin{tabular}{lcccccccccc}
    \toprule
    Method & Group $0$  & Group $1$  & Group $2$  & Group $3$  & Group $4$  & Group $5$  & Group $6$  & Group $7$  & Group $8$  & Group $9$ \\
    \midrule
    Uniform & $0.1006$ & $0.0998$ & $0.0997$ & $0.1000$ & $0.0994$ & $0.1006$ & $0.0993$ & $0.1006$ & $0.1001$ & $0.0999$ \\ 
    \midrule OCS & $0.1257$ & $0.0433$ & $0.1661$ & $0.1004$ & $0.1666$ & $0.0583$ & $0.1653$ & $0.0477$ &  $0.0733$ & $0.0533$ \\ 
    \midrule
    % DPP-Norm &
    % $0.1186$ & $0.0460$ & $0.1656$ & $0.0949$ & $0.1665$ & $0.0627$ & $0.1651$ & $0.0507$ & $0.0741$ & $0.0558$ \\
    % \midrule
    PowerOfChoice &
    $0.1042$ & $0.0441$ & $0.1444$ & $0.0881$ & $0.1388$ & $0.1417$ & $0.1643$ & $0.0644$ & $0.0641$ & $0.0460$ \\
    \midrule
    ADCS &
    $0.0952$ & $0.0636$ & $0.1127$ & $0.0843$ & $0.1032$ & $0.1641$ & $0.1564$ & $0.0775$ & $0.0821$ & $0.0609$ \\
    \bottomrule
  \end{tabular}%
  }
\end{table*}
\begin{figure*}[t]
\vspace{3mm}
\centering
\begin{subfigure}[b]{0.47\textwidth}
    \centering
    \includegraphics[width=\textwidth]{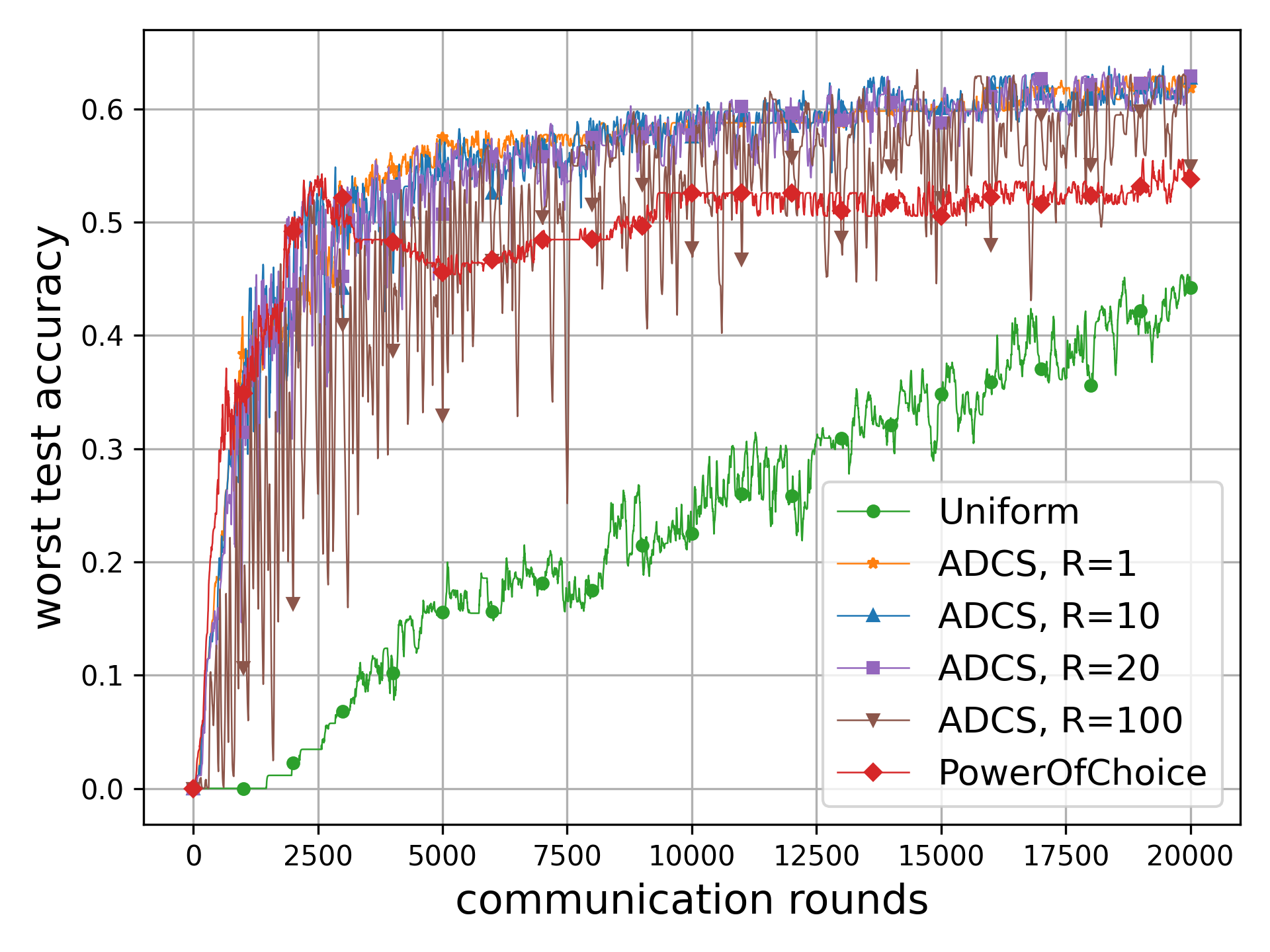}
\end{subfigure}
\begin{subfigure}[b]{0.47\textwidth}
    \centering
    \includegraphics[width=\textwidth]{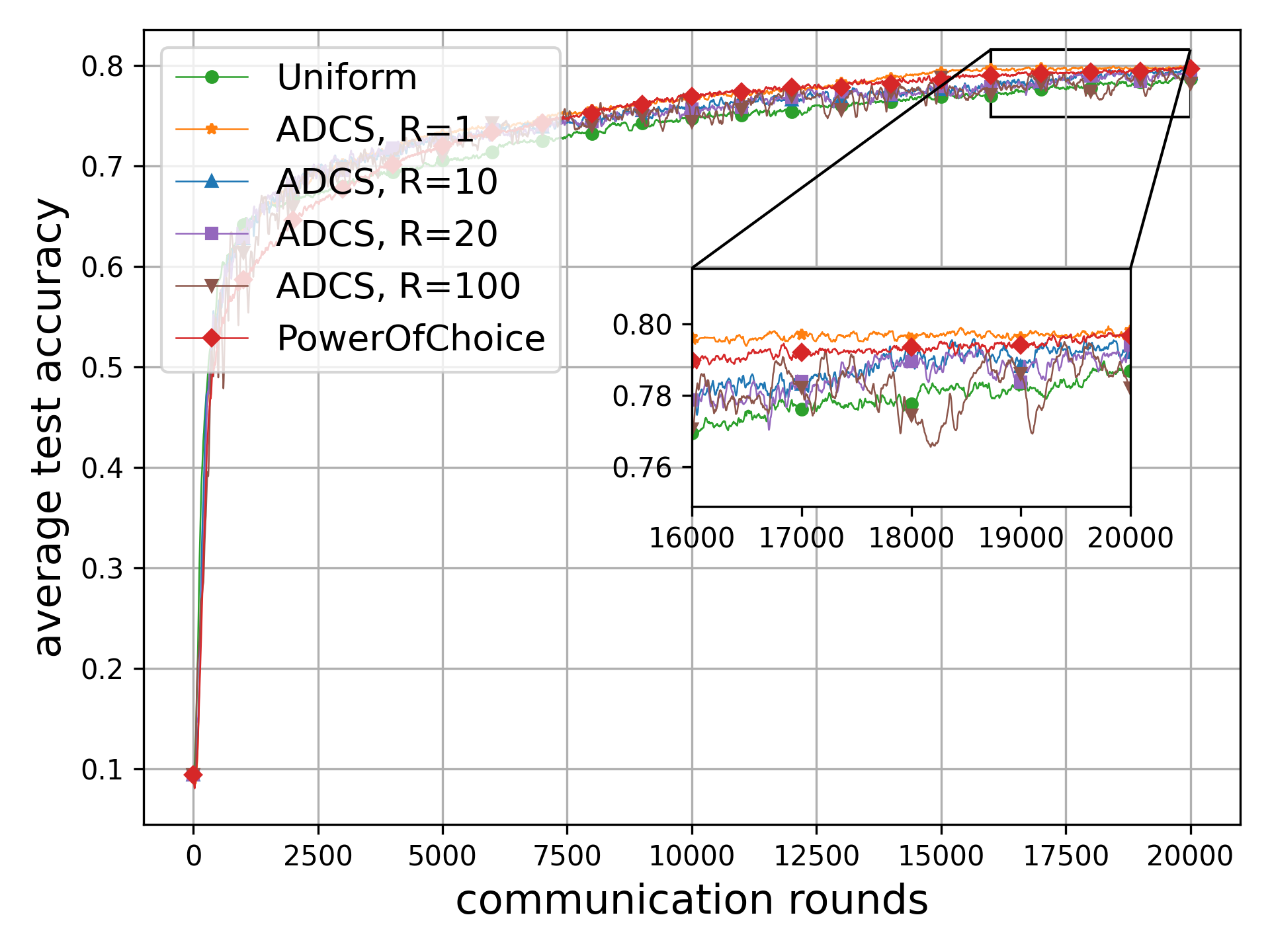}
\end{subfigure}
\caption{Worst and average test accuracies for varying $R$.}
\label{dpp:figure:comparison_R}
\end{figure*}

We perform experiments on Fashion-MNIST~\citep{Fashion_MNIST} using multinomial logistic regression. We set $N=30$. Specifically, we have $10$ virtual groups of clients, where each group contains $3$ clients and the underlying data distribution is the same for all clients in the same group. We consider the following benchmarks
\begin{itemize}
    \item \textsc{Uniform}~\citep{AISTATS17_McMahan}: choosing $m$ clients uniformly at random out of $N$ clients in each round.
    % \item \textsc{Unifrom-Probablistic}~\citep{AISTATS17_McMahan}: each client has a sampling probability of $\frac{m}{N}$ and the updates are compensated by dividing $\frac{m}{N}$.
    \item \textsc{PowerOfChoice}~\citep{AISTATS22_Cho}: clients with top $m$ largest local losses.
    \item \textsc{OCS}~\citep{arXiv21_CHR}: clients with top $m$ largest gradient norms (modified from probabilistic setting).
    % \item \textsc{DivFL}~\citep{ICLR22_Balakrishnan}: solving submodular maximization problems.
    % \item \citep{INFOCOM21_Li}
    % \item \citep{ICASSP23_Zhang}
    % \item \citep{DCOSS24_Bastola}
    % \item \textsc{AdaDPP-Norm}: Use the norm of gradients as the quality scores.
    \item \textsc{DivFL}~\citep{ICLR22_Balakrishnan}: This method selects a diverse subset via submodular function maximization.
    \item \textsc{StaticDPP}: This method constructs DPP only once before the training and selects the most diverse subset, which is adapted from previous work~\citep{INFOCOM21_Li,ICASSP23_Zhang,DCOSS24_Bastola}.
    \item \textsc{ADCS}: our proposed method.
\end{itemize}
We perform $T=20000$ rounds of training. For local computation, we use one step of SGD of batch size $16$ with a learning rate of $0.001$ for all methods. Every method schedules $m=18$ clients for partial client participation in each training round. {\color{blue} }

% \subsection{Synthetic Dataset}
% See the Appendix.
% \subsection{Fashion-MNIST}

\subsection{Performance Comparison}
In Figure~\ref{dpp:figure:comparison}, we compare the test accuracy of \textsc{ADCS} with all benchmarks in terms of training rounds. We show both the worst accuracy, which refers to the worst-case test accuracy among all clients, and the average accuracy, which refers to the average test accuracy among all clients. We set $\theta = 0.8$ for \textsc{ADCS}.  We observe that pure quality-based methods \textsc{OCS} and \textsc{PowerOfChoice}, as well as diversity-based \textsc{DivFL} methods, can achieve better worst test accuracy compared with the na\"ive \textsc{Uniform} algorithm. However, by considering both diversity and quality, \textsc{ADCS} provides the highest
worst test accuracy while maintaining competitive average test accuracy. \textsc{StaticDPP} has significantly low worst test accuracy since the data distributions are highly heterogeneous and only a subset of clients is chosen, leaving many clients untouched.

\subsection{Effects of \texorpdfstring{$\theta$}{theta} and \texorpdfstring{$R$}{R}}
The performance of our \textsc{ADCS} is dependent on the hyper-parameter $\theta$, which trades off between the diversity and quality terms. The results of varying $\theta$ are shown in Figure~\ref{dpp:figure:comparison_theta}. We observe that in the current setting, with a properly chosen $\theta$, both the worst test accuracy and the average test accuracy can be improved over those of the benchmarks. However, with a poorly chosen $\theta$, e.g., $\theta = 0.2$, the worst test accuracy can be even worse than that of \textsc{Uniform}.

The performance of \textsc{ADCS} is also dependent on the hyper-parameter $R$, which determines the frequency of the updates of the DPP kernel $L$. We fix $\theta=0.8$ and vary the update gap $R$. The results are shown in Figure~\ref{dpp:figure:comparison_R}. With a small $R$ up to $20$, the performance of \textsc{ADCS} remains almost identical to that of $R=1$. However, when $R$ becomes too large, the performance of \textsc{ADCS} on the worst-off client can deteriorate and fluctuate.

% \subsection{Comparison of Different Methods with Communication Time}
% We consider heterogeneous communication time between different clients and the server. Note that in the following figures, the $x$-axis is now communication time instead of training rounds. 

% \newpage
% \subsection{Evolution of $L$-ensemble Kernel}
% \begin{figure}[htb!]
%   \begin{center}
%     \includegraphics[width=\textwidth]{figures/Fashion-MNIST-LR-N10-m5-T20000-stepsize0.001-batch_size16-non-iid-kernel-DPP-0.6.png}  
%   \end{center}
%     \caption{Evolution of $L$-ensemble kernel for \textsc{DPP-Norm}.}
%     \label{fig:heatmap:norm}
% \end{figure}  

% \begin{figure}[htb!]
%   \begin{center}
%     \includegraphics[width=\textwidth]{figures/Fashion-MNIST-LR-N10-m5-T20000-stepsize0.001-batch_size16-non-iid-kernel-DPP-Loss-0.6.png}  
%   \end{center}
%     \caption{Evolution of $L$-ensemble kernel for \textsc{DPP-Loss}.}
%     \label{fig:heatmap:loss}
% \end{figure}  

% \subsection{Evolution of $L$}
% We show the evolution of the kernel matrix $L$ during the entire training of our \textsc{ADCS} in Figure~\ref{dpp:figure:comparison_L}. We observe that gradients from the clients of the same cluster are indeed similar since the corresponding entries in $L$ are large. The entries correspond to clients from different clients can vary a lot. Besides, the entries of $L$ is not fixed and is gradually changing over time, indicating that the kernels are capturing the learning dynamics, probability from both the model and the data distributions. 

\subsection{Empirical Selection Bias}

% \begin{figure}[t]
% \centering
% \begin{subfigure}[b]{0.23\textwidth}
%     \centering
%     \includegraphics[width=\textwidth]{figures/Fashion-MNIST-LR-N30-m18-T20000-stepsize0.001-batch_size16-non-iid-copy-comparison-bias-smooth.png}
% \end{subfigure}
% \caption{Gradient approximation bias for varying $\theta$.}
% \label{dpp:figure:comparison_bias}
% \end{figure}

In Table~\ref{tab:selection_ratio}, we study the client selection bias by showing the ratio of each group of clients selected during training. To further quantify bias, we use the total variation (TV) distance. The TV distance between any two distributions $s, t \in \Delta_{K-1}$ is $\mathrm{TV}(s,t) = \frac{1}{2}\sum_{k=1}^{10} \| s_k - t_k \|$. Using the uniform target distribution as $t$ with ratio $0.1$ for each group, the TV distance is $0.0019$ for \textsc{Uniform}, $0.2241$ for \textsc{OCS}, $0.1934$ for \textsc{PowerOfChoice}, and $0.1364$ for \textsc{ADCS}. It is clear that uniform client selection is the least biased, but it does not perform well on the worst-case test accuracy as shown in Figures~\ref{dpp:figure:comparison},~\ref{dpp:figure:comparison_theta} and~\ref{dpp:figure:comparison_R}.

% In the theoretical analysis, we assume the gradient approximation error is bounded. The unnormalized bias is shown in Fig.~\ref{dpp:figure:comparison_bias} with different $\theta$. We observe that the gradient approximation error is indeed upper bounded, though there is no simple pattern in how $\theta$ affects the gradient approximation error empirically. 

% \begin{figure}[htb!]
% \centering
% % \begin{subfigure}
% %     \centering
% %     \includegraphics[width=0.45\textwidth]{figures/Fashion-MNIST-LR-N30-m18-T20000-stepsize0.001-batch_size16-non-iid-copy-comparison-bias.png}
% % \end{subfigure}
% \begin{subfigure}
%     \centering
%     \includegraphics[width=0.45\textwidth]{figures/Fashion-MNIST-LR-N30-m18-T20000-stepsize0.001-batch_size16-non-iid-copy-comparison-bias-smooth.png}
% \end{subfigure}
% \caption{Comparison of gradient approximation error with different $\theta$ for AdaDPP.}
% \label{fig:DPP:bias:unnormalized}
% \end{figure}

\section{Conclusion}\label{sec:conclusion}
In this work, we have studied the client scheduling problem in FL from a joint diversity-quality perspective. Departing from existing approaches that rely solely on additive quality scores or a pure diversity objective, we propose a principled framework based on DPPs to explicitly capture correlations and redundancy among clients. We introduce \textsc{ADCS}, a practical federated learning algorithm that performs client scheduling via MAP inference of an $m$-DPP.  We investigate the upper bound of selection bias in general and specifically for \textsc{ADCS} under a clustering structure of gradients. We further derive a convergence bound for biased client scheduling for general non-convex functions, thus proving the convergence of \textsc{ADCS}. Experiments on non-IID federated learning demonstrated that \textsc{ADCS} consistently improves performance on worst-off clients, compared with state-of-the-art client scheduling methods. Our results further highlight the importance of diversity-aware selection, particularly in heterogeneous data settings.

% \section*{Acknowledgments} 

\bibliographystyle{plainnat}
\bibliography{references}

\newpage
\appendix
\section{Proof of Theorem~\ref{dpp:thm:cluster_adaDPP}}
\label{dpp:appendix:thm1}
\begin{proof}
We denote the $L$-ensemble kernel constructed by \textsc{ADCS} be $L_{\text{ADCS}}$. For every subset \(S\subseteq[N]\),
\begin{equation}
\det((L_{\text{ADCS}})_S)
=
\left(\prod_{i\in S} w_i\right)\det(C_S).
\label{eq:det-factorization-corrected}
\end{equation}
We define normalized $u_k \triangleq \frac{c_k}{\|c_k\|_2}$. We divide the proof into four steps. 

\medskip
\noindent
\textbf{Step 1: Every subset with clients from a duplicate cluster has small determinant.}
Take any subset \(S\subseteq[N]\) with \(|S|=m\), and suppose \(S\) contains two distinct clients from the same cluster. After relabeling the selected indices, we can write $S=\{i_1,\dots,i_m\}$
with $z_{i_1}=z_{i_2}$. Let $X_S = [\phi_{i_1},\dots,\phi_{i_m}] \in \mathbb{R}^{d\times m}$, then $C_S = X_S^\top X_S$. From Assumption~\ref{dpp:assumption:cluster}, we have
\begin{align}
\left\|\phi_i-u_k\right\|_2
=
\left\|
\frac{g_i}{\|g_i\|_2}-\frac{c_k}{\|c_k\|_2}
\right\|_2
\le
\frac{2\|g_i-c_k\|_2}{\|c_k\|_2}
\le
\frac{2r}{\kappa}
\triangleq \delta
\end{align}
for every \(i\in C_k\). Since \(z_{i_1}=z_{i_2}\), both \(\phi_{i_1}\) and \(\phi_{i_2}\) lie within distance \(\delta\) of the same center \(u_{z_{i_1}}\). Hence, by the triangle inequality,
\begin{align}
\|\phi_{i_2}-\phi_{i_1}\|_2
\le
\|\phi_{i_2}-u_{z_{i_1}}\|_2
+
\|\phi_{i_1}-u_{z_{i_1}}\|_2
\le
2\delta.
\end{align}

Now define the elementary matrix $M = I_m - E_{1,2}$, where $E_{1,2}$ has a $1$ in position $(1,2)$ and zeros elsewhere. Then $\det(M)=1$. We set $Y_S = X_S M$, where the second column of \(Y_S\) is $(Y_S)_{:,2} = \phi_{i_2}-\phi_{i_1}$ 
while every other column of \(Y_S\) equals one of the original selected vectors. Therefore
\begin{align}
\det(C_S)
=
\det(X_S^\top X_S)
% =
% \det(M^\top X_S^\top X_S M)
=
\det(Y_S^\top Y_S)
\stackrel{(a)}{\leq}
\prod_{j=1}^m \|(Y_S)_{:,j}\|_2^2
\stackrel{(b)}{\leq}
4\delta^2,
\end{align}
where $(a)$ is by the Hadamard's inequality, $(b)$ is by the fact that all unchanged columns have norm \(1\) since \(\|\phi_i\|_2=1\) and the modified second column satisfies $\|(Y_S)_{:,2}\|_2
= \|\phi_{i_2}-\phi_{i_1}\|_2 \le 2\delta$.

Using \eqref{eq:det-factorization-corrected} and the bound \(w_i\le \bar w\), we obtain
\begin{equation}
\det\!\big((L_{\text{ADCS}})_S\big)
\le
4\delta^2\,\bar w^{\,m}.
\label{eq:duplicate-upper-corrected}
\end{equation}

\medskip
\noindent
\textbf{Step 2: There exists a distinct-cluster subset with larger determinant.}
Choose $T^\sharp \in \arg\max_{\substack{T\subseteq[K]\\ |T|=m}}
\prod_{k\in T}\bar w_k$, so that $\prod_{k\in T^\sharp}\bar w_k = Q_m$. For each \(k\in T^\sharp\), choose a client \(i_k\in \mathcal C_k\) such that $w_{i_k} = \bar w_k$. Let $S^\sharp := \{i_k:k\in T^\sharp\}$. By construction, \(S^\sharp\) contains one client from each of \(m\) distinct clusters. Let
\begin{align}
& X^\sharp \triangleq [\phi_{i_k}]_{k\in T^\sharp},  \\
& U^\sharp \triangleq [u_k]_{k\in T^\sharp}, \\
& E^\sharp \triangleq X^\sharp - U^\sharp.
\end{align}
Each column of \(E^\sharp\) has norm at most \(\delta\), so $\|E^\sharp\|_2
\le \|E^\sharp\|_F \le \sqrt{m}\,\delta$. Consider the Gram matrix \((U^\sharp)^\top U^\sharp\). Its diagonal entries are \(1\), and its off-diagonal entries satisfy
\begin{align}
|(u_k)^\top u_\ell| \le \mu,
\qquad \forall (k\neq \ell).
\end{align}
By Gershgorin's circle theorem,
\begin{align}
\lambda_{\min}\big((U^\sharp)^\top U^\sharp\big)
\ge
1-(m-1)\mu.
\end{align}
Hence
\begin{align}
\sigma_{\min}(U^\sharp)
=
\sqrt{\lambda_{\min}\big((U^\sharp)^\top U^\sharp\big)}
\ge
\sqrt{1-(m-1)\mu}.
\end{align}

By Weyl's inequality for singular values,
\begin{align}
\sigma_j(X^\sharp)
\ge
\sigma_j(U^\sharp)-\|E^\sharp\|_2
\ge
\sigma_{\min}(U^\sharp)-\sqrt{m}\,\delta
\ge
\rho,
\end{align}
for all $j \in [m]$. Since \(\rho>0\), this yields
\begin{align}
\det(C_{S^\sharp})
=
\det\!\big((X^\sharp)^\top X^\sharp\big)
=
\prod_{j=1}^m \sigma_j(X^\sharp)^2
\ge
\rho^{2m}.
\end{align}

\textbf{Step 3: Comparing the two determinants.} Combining this with \eqref{eq:det-factorization-corrected},
\begin{align}
\det\!\big(L_{S^\sharp}\big)
=
\left(\prod_{k\in T^\sharp} w_{i_k}\right)\det(C_{S^\sharp})
\ge
\left(\prod_{k\in T^\sharp}\bar w_k\right)\rho^{2m}
=
Q_m\rho^{2m}.
\end{align}
From the condition in~\eqref{eqn:assumption:intra-cluster-concentration} and recall $\delta = 2r/\kappa$, we can show that
\begin{align}
    Q_m\rho^{2m}>16r^2\bar{w}^{m}/\kappa^2,
\label{eqn:assumption:thm1_condition}
\end{align}
Let $F(r)\triangleq Q_m(\sqrt{1-(m-1)\mu}-2r\sqrt {m}/\kappa)^{2m}-16r^2\bar w^{\,m}/\kappa^2$. Since \(Q_m>0\), we have \(F(0)=Q_m \left(\sqrt{1-(m-1)\mu}\right)^{2m}>0\). By continuity of \(F\), there exists \(r_\star>0\) such that $F(r_\star) = 0$, or otherwise $F(\cdot) > 0$ always holds for $r \geq 0$, in which case $r_\star = +\infty$. Thus, we obtain
\begin{align}\label{DPP:eqn:thm1:step2}
\det\big(L_{S^\sharp}\big)
>
\det\big(L_S\big).
\end{align}
We conclude that every subset containing a duplicate cluster has strictly smaller determinant than the particular distinct-cluster subset \(S^\sharp\). Therefore, no MAP maximizer can contain clients from a duplicate cluster. Since \(|S_{\text{ADCS}}^\star|=m\), it follows that every exact MAP solution selects \(m\) distinct clusters.

\textbf{Step 4: Bounding approximation error of $\bar g_{S_{\text{ADCS}}^\star}$.}
The set $S_{\text{ADCS}}^\star$ contains exactly one client from each cluster in \(T_{\text{ADCS}}^\star\), and none from the remaining clusters. Let $\epsilon_{i} \triangleq g_i - c_{z_i}$. We have $\|\epsilon_{i}\| \leq r$. Therefore,
\begin{align}
\bar g_{S_{\text{ADCS}}^\star}
& =
\frac1m\sum_{k\in T_{\text{ADCS}}^\star} c_k
+
\frac1m\sum_{i\in S_{\text{ADCS}}^\star}\varepsilon_i \nonumber \\
& =
\sum_{k=1}^K \frac{\mathbf{1}\{k\in T_{\text{ADCS}}^\star\}}{m} c_k
+
\frac1m\sum_{i\in S_{\text{ADCS}}^\star}\varepsilon_i.
\end{align}
Also,
\begin{align}
\bar g_N
=
\sum_{k=1}^K \frac{n_k}{N} c_k
+
\frac1N\sum_{i=1}^N \varepsilon_i.
\end{align}
Therefore,
\begin{align}
& \bar g_{S_{\text{ADCS}}^\star}-\bar g_N =
\sum_{k=1}^K
\left(
\frac{\mathbf{1}\{k\in T_{\text{ADCS}}^\star\}}{m}
-
\frac{n_k}{N}
\right)c_k
+
\left(
\frac1m\sum_{i\in S_{\text{ADCS}}^\star}\varepsilon_i
-
\frac1N\sum_{i=1}^N \varepsilon_i
\right).
\end{align}
Applying the triangle inequality and using \(\|\varepsilon_i\|_2\le r\) gives the bound.
\end{proof}

\vspace{2mm}
\section{Proof of Corollary~\ref{dpp:corollary:comparison}}
\label{dpp:appendix:cor1}
\begin{proof}
Since \(g_i=c_{z_i}\) and the vectors \(c_1,\dots,c_K\) are orthonormal, each \(c_k\) has unit norm. Hence $\phi_i = \frac{g_i}{\|g_i\|_2} = c_{z_i}$. Therefore Theorem~\ref{dpp:thm:cluster_adaDPP} applies with $u_k=c_k$, $r=0$, $\mu=0$. We have
\begin{align}
    \rho=\sqrt{1-(m-1)\mu}-2\sqrt{m}r/\kappa = 1.
\end{align}

Moreover, since \(w_i=e^{2\alpha q_i}>0\) for all \(i\), we have
\begin{align}
    Q_m\rho^{2m}=Q_m>0=16r^2\bar w^m/\kappa,
\end{align}
so the strict inequality in Theorem~\ref{dpp:thm:cluster_adaDPP} is satisfied. Thus every exact MAP solution \(S_{\text{ADCS}}^\star\) still selects \(m\) distinct clusters.

Now apply Theorem~\ref{dpp:thm:cluster_adaDPP}. Since \(r=0\) and \(n_k=N/K\) for all \(k\),
\begin{align}
\|\bar g_{S_{\text{ADCS}}^\star}-\bar g_N\|_2
\le
\left\|
\sum_{k=1}^K
\left(
\frac{\mathbf{1}\{k\in T_{\text{ADCS}}^\star\}}{m}
-
\frac{1}{K}
\right)c_k
\right\|_2.
\end{align}
Because the vectors \(c_1,\dots,c_K\) are orthonormal and \(|T_{\text{ADCS}}^\star|=m\),
\begin{align}
\left\|
\sum_{k=1}^K
\left(
\frac{\mathbf{1}\{k\in T_{\text{ADCS}}^\star\}}{m}
-
\frac{1}{K}
\right)c_k
\right\|_2^2
=
\sum_{k=1}^K
\left(
\frac{\mathbf{1}\{k\in T_{\text{ADCS}}^\star\}}{m}
-
\frac{1}{K}
\right)^2.
\end{align}
There are exactly \(m\) indices in \(T_{\text{ADCS}}^\star\) and \(K-m\) indices outside \(T_{\text{ADCS}}^\star\), so
\begin{align}
& \sum_{k=1}^K
\left(
\frac{\mathbf{1}\{k\in T_{\text{ADCS}}^\star\}}{m}
-
\frac{1}{K}
\right)^2  =
m\left(\frac{1}{m}-\frac{1}{K}\right)^2
+
(K-m)\left(\frac{1}{K}\right)^2
=
\frac{1}{m}-\frac{1}{K}.
\end{align}
Hence
\begin{align}
\|\bar g_{S_{\text{ADCS}}^\star}-\bar g_N\|_2
\le
\sqrt{\frac{1}{m}-\frac{1}{K}}.
\end{align}

% Next consider the generic spectral upper bound. Since $G=[g_1,\dots,g_N]$ and $g_i=c_{z_i}$, we have
% \[
% GG^\top
% =
% \sum_{i=1}^N g_i g_i^\top
% =
% \sum_{k=1}^K n_k\, c_k c_k^\top
% =
% \frac{N}{K}\sum_{k=1}^K c_k c_k^\top.
% \]
% Because \(c_1,\dots,c_K\) are orthonormal, \(\sum_{k=1}^K c_k c_k^\top\) is the orthogonal projector onto \(\mathrm{span}\{c_1,\dots,c_K\}\), whose largest eigenvalue is \(1\). Therefore
% \[
% \lambda_{\max}(GG^\top)=\frac{N}{K}.
% \]
% Since \(L=G^\top G\) and \(GG^\top\) have the same nonzero eigenvalues,
% \[
% \lambda_{\max}(L)=\frac{N}{K}.
% \]
% Substituting into the generic spectral upper bound gives
% \[
% \|\bar g_{S^\star}-\bar g_N\|_2
% \le
% \sqrt{\lambda_{\max}(L)\left(\frac{1}{m}-\frac{1}{N}\right)}
% =
% \sqrt{\frac{N}{K}\left(\frac{1}{m}-\frac{1}{N}\right)}
% =
% \sqrt{\frac{N-m}{Km}}.
% \]

% Finally, if \(N>K\) and \(m<K\), then
% \[
% \frac{N-m}{Km}
% -
% \left(\frac{1}{m}-\frac{1}{K}\right)
% =
% \frac{N-m}{Km}-\frac{K-m}{Km}
% =
% \frac{N-K}{Km}
% >0.
% \]
% Hence
% \[
% \sqrt{\frac{1}{m}-\frac{1}{K}}
% <
% \sqrt{\frac{N-m}{Km}}.
% \]
% If instead \(m=K\), then the cluster-aware bound equals
% \[
% \sqrt{\frac{1}{K}-\frac{1}{K}}=0,
% \]
% while the generic spectral bound becomes
% \[
% \sqrt{\frac{N-K}{K^2}},
% \]
% which is strictly positive when \(N>K\).
\end{proof}

\vspace{2mm}
\section{Proof of Theorem~\ref{thm:nonconvex_dpp_fedavg}}
\label{dpp:appendix:thm2}
\begin{proof}
We consider the global objective $f(\model) = \frac{1}{N}\sum_{i=1}^N f_i(\model)$, whose gradient is $\nabla f(\model) = \frac{1}{N}\sum_{i=1}^N \nabla f_i(\model)$. At each communication round $t$, the server selects a subset $\cS^{(t)} \subseteq [N]$ of size $|\cS^{(t)}|=m$ via some client selection optimization.

At round $t$, the server broadcasts the global model $\modelatt$. Each selected client $i\in \cS^{(t)}$ runs $E$ local SGD steps with local stepsize $\eta_\ell>0$ after the initialization $\model^{(t,0)}_{i}=\modelatt$
\begin{equation}
\model^{(t,e+1)}_{i}=\model^{(t,e)}_{i}-\eta_\ell\, g^{(t,e)}_{i},
\quad \text{for all } e=0,1,\dots,E-1,
\label{eq:local_sgd}
\end{equation}
where $g^{(t,e)}_{i}$ is a stochastic gradient $\nabla f_i(\model^{(t,e)}_{i}; \xi^{(t,e)}_{i})$ using a mini-batch. From~\eqref{eq:local_sgd}, for any selected client $i\in \cS^{(t)}$,
\begin{equation}
\model^{(t,E)}_{i}
=
\modelatt-\eta_\ell\sum_{e=0}^{E-1} g^{(t,e)}_{i}.
\end{equation}
The server aggregates the local models in each round $t$
\begin{align}
\modelattplusone \;=\; \frac{1}{m}\sum_{i\in \cS^{(t)}}\model^{(t,E)}_{i} 
& =
\frac{1}{m}\sum_{i\in \cS^{(t)}}\left(\modelatt-\eta_\ell\sum_{e=0}^{E-1} g^{(t,e)}_{i}\right)
& = 
\modelatt-\eta_\ell\sum_{e=0}^{E-1}\underbrace{\left(\frac{1}{m}\sum_{i\in \cS^{(t)}}g^{(t,e)}_{i}\right)}_{\triangleq \hat g_{t,e}}.
\end{align}
Define $\eta = E\eta_\ell$ and $G_t = \frac{1}{E}\sum_{e=0}^{E-1}\hat g_{t,e}$. Then $\modelattplusone = \modelatt-\eta G_t.$

By $L$-smoothness of $f$, for $\modelattplusone=\modelatt-\eta G_t$, we have
\begin{align}
f(\modelattplusone)
&\le
f(\modelatt) + \langle \nabla f(\modelatt), \modelattplusone-\modelatt\rangle
+ \frac{L}{2}\|\modelattplusone-\modelatt\|^2 \nonumber\\
&=
f(\modelatt) - \eta\langle \nabla f(\modelatt), G_t\rangle
+ \frac{L\eta^2}{2}\|G_t\|^2.
\end{align}
Taking conditional expectation given $\mathcal{F}_t$ yields
\begin{align}
& \mathbb{E}[f(\modelattplusone)\mid \mathcal{F}_t] \le
f(\modelatt)
\underbrace{-\eta\left\langle \nabla f(\modelatt), \mathbb{E}[G_t\mid \mathcal{F}_t]\right\rangle}_{\text{B1}}
+\underbrace{\frac{L\eta^2}{2}\mathbb{E}[\|G_t\|^2\mid \mathcal{F}_t]}_{\text{B2}}.
\label{eq:smooth_cond}
\end{align}

Now we start to bound $B1$. We first decompose $G_t$ into gradient, bias, drift, and zero-mean noise.
For each $e$, decompose $\hat g_{t,e}$ as
\begin{align}
\hat g_{t,e}
&=
\underbrace{\frac{1}{m}\sum_{i\in \cS^{(t)}}\nabla f_i(\modelatt)}_{\bar g_t}
+
\underbrace{\frac{1}{m}\sum_{i\in \cS^{(t)}}\Big(\nabla f_i(\model^{(t,e)}_{i})-\nabla f_i(\modelatt)\Big)}_{d_{t,e}} \nonumber \\
& \quad +
\underbrace{\frac{1}{m}\sum_{i\in \cS^{(t)}}\Big(g^{(t,e)}_{i}-\nabla f_i(\model^{(t,e)}_{i})\Big)}_{\xi_{t,e}}.
\label{eq:hatg_decomp}
\end{align}
Average over $e=0,\dots,E-1$:
\begin{equation}
G_t
=
\bar g_t + d_t + \xi_t,
\qquad
d_t\triangleq \frac{1}{E}\sum_{e=0}^{E-1}d_{t,e},
\quad
\xi_t\triangleq \frac{1}{E}\sum_{e=0}^{E-1}\xi_{t,e}.
\label{eq:Gt_decomp}
\end{equation}
Next, decompose $\bar g_t$ into its conditional mean and deviation:
\begin{equation}
\bar g_t
=
\mathbb{E}[\bar g_t\mid \mathcal{F}_t]+\zeta_t
=
\nabla f(\modelatt)+b_t+\zeta_t,
\end{equation}
where $b_t = \mathbb{E}[b_t \mid \mathcal{F}_t] = \mathbb{E}[\bar g_t\mid \mathcal{F}_t] - \nabla f(\modelatt)$ is the conditional bias and $\zeta_t = \bar g_t-\mathbb{E}[\bar g_t\mid \mathcal{F}_t]$ is the zero-mean selection deviation. Clearly, $\mathbb{E}[\zeta_t\mid \mathcal{F}_t]=0$.
Combining with~\eqref{eq:Gt_decomp},
\begin{equation}
G_t
=
\nabla f(\modelatt)+b_t+d_t+u_t,
\label{eq:Gt_master}
\end{equation}
where $u_t=\zeta_t+\xi_t$ and $\mathbb{E}[u_t\mid \mathcal{F}_t]=0$. From~\eqref{eq:Gt_master}, we obtain
\begin{align}
\mathbb{E}[G_t\mid \mathcal{F}_t]
=
\nabla f(\modelatt)+b_t+\mathbb{E}[d_t\mid \mathcal{F}_t].
\end{align}
Hence
\begin{align}
&-\eta\left\langle \nabla f(\modelatt), \mathbb{E}[G_t\mid \mathcal{F}_t]\right\rangle =
-\eta\|\nabla f(\modelatt)\|^2
-\eta\langle \nabla f(\modelatt), b_t\rangle
-\eta\left\langle \nabla f(\modelatt), \mathbb{E}[d_t\mid \mathcal{F}_t]\right\rangle. 
\end{align}
Apply Young's inequality $\langle a,c\rangle \le \frac{1}{4}\|a\|^2+\|c\|^2$ twice:
\begin{align}
-\eta\langle \nabla f(\modelatt), b_t\rangle
&\le
\eta\cdot\frac{1}{4}\|\nabla f(\modelatt)\|^2 + \eta\|b_t\|^2, \\
-\eta\left\langle \nabla f(\modelatt), \mathbb{E}[d_t\mid \mathcal{F}_t]\right\rangle
&\le
\eta\cdot\frac{1}{4}\|\nabla f(\modelatt)\|^2 + \eta\|\mathbb{E}[d_t\mid \mathcal{F}_t]\|^2.
\end{align}
Therefore
\begin{align}
& -\eta\left\langle \nabla f(\modelatt), \mathbb{E}[G_t\mid \mathcal{F}_t]\right\rangle
\nonumber \\
& \le
-\frac{\eta}{2}\|\nabla f(\modelatt)\|^2
+\eta\|b_t\|^2
+\eta\|\mathbb{E}[d_t\mid \mathcal{F}_t]\|^2 \nonumber \\
& \le
-\frac{\eta}{2}\|\nabla f(\modelatt)\|^2
+\eta\|b_t\|^2
+\eta\,\mathbb{E}[\|d_t\|^2\mid \mathcal{F}_t],
\label{eq:progress_bound}
\end{align}
where the last inequality is by Jensen's inequality $\|\mathbb{E}[d_t\mid \mathcal{F}_t]\|^2 \le \mathbb{E}[\|d_t\|^2\mid \mathcal{F}_t]$.

Now we start to bound B2. Using $\|a+b+c+d\|^2 \le 4(\|a\|^2+\|b\|^2+\|c\|^2+\|d\|^2)$ in~\eqref{eq:Gt_master}, we obtain
\begin{align}
\|G_t\|^2
\le
4\Big(\|\nabla f(\modelatt)\|^2+\|b_t\|^2+\|d_t\|^2+\|u_t\|^2\Big).
\end{align}
Thus
\begin{align}
\mathbb{E}[\|G_t\|^2\mid \mathcal{F}_t] \le
4\|\nabla f(\modelatt)\|^2
+4\|b_t\|^2
+4\mathbb{E}[\|d_t\|^2\mid \mathcal{F}_t]
+4\mathbb{E}[\|u_t\|^2\mid \mathcal{F}_t].
\label{eq:norm_bound:prelim}
\end{align}
Now bound $\mathbb{E}[\|u_t\|^2\mid \mathcal{F}_t]$.
Since $u_t=\zeta_t+\xi_t$,
\begin{align}
\mathbb{E}[\|u_t\|^2\mid \mathcal{F}_t]
\le
2\mathbb{E}[\|\zeta_t\|^2\mid \mathcal{F}_t]
+
2\mathbb{E}[\|\xi_t\|^2\mid \mathcal{F}_t].
\end{align}
By the bound on selection randomness variance, $\mathbb{E}[\|\zeta_t\|^2\mid \mathcal{F}_t]\le \nu_{\rm sel}$.
For $\xi_t=\frac{1}{E}\sum_{e=0}^{E-1}\xi_{t,e}$ and Assumption~\ref{DPP:assumption:bounded2moment}, a standard averaging bound gives $\mathbb{E}[\|\xi_{t,e}\|^2\mid \mathcal{F}_t]\le \frac{\sigma^2}{m}$ and $\mathbb{E}[\|\xi_t\|^2\mid \mathcal{F}_t]\le \frac{\sigma^2}{mE}$. Hence
\begin{align}
\mathbb{E}[\|u_t\|^2\mid \mathcal{F}_t]
\le
2\nu_{\rm sel}+\frac{2\sigma^2}{mE}.
\label{eq:u_var_bound}
\end{align}

Next, we show
\begin{equation}
\mathbb{E}[\|d_t\|^2\mid \mathcal{F}_t]
\le
\frac{L^2\eta_\ell^2 G^2 (E-1)(2E-1)}{6}.
\label{eq:drift_bound}
\end{equation}
By $L$-smoothness,
\begin{equation}
\|\nabla f_i(\model^{(t,e)}_{i})-\nabla f_i(\modelatt)\|
\le L\|\model^{(t,e)}_{i}-\modelatt\|.
\end{equation}
We obtain
\begin{equation}
\|d_{t,e}\|
\le
\frac{1}{m}\sum_{i\in \cS^{(t)}} L\|\model^{(t,e)}_{i}-\modelatt\|.
\end{equation}
Using Cauchy–Schwarz inequality $(\frac1m\sum a_i)^2\le \frac1m\sum a_i^2$,
\begin{equation}
\|d_{t,e}\|^2
\le
L^2\cdot \frac{1}{m}\sum_{i\in \cS^{(t)}}\|\model^{(t,e)}_{i}-\modelatt\|^2.
\end{equation}
Moreover, from~\eqref{eq:local_sgd},
\begin{equation}
\model^{(t,e)}_{i}-\modelatt
=
-\eta_\ell\sum_{j=0}^{e-1} g^{(i)}_{t,j},
\end{equation}
and by Cauchy-Schwarz inequality,
\begin{equation}
\|\model^{(t,e)}_{i}-\modelatt\|^2
\le
\eta_\ell^2\, e\sum_{j=0}^{e-1}\|g^{(i)}_{t,j}\|^2.
\end{equation}
Taking conditional expectation and applying Assumption~\ref{DPP:assumption:unbiasedness} yields
\begin{equation}
\mathbb{E}[\|\model^{(t,e)}_{i}-\modelatt\|^2\mid \mathcal{F}_t]
\le
\eta_\ell^2\, e\cdot e\, G^2
=
\eta_\ell^2 e^2 G^2.
\end{equation}
Therefore, $\mathbb{E}[\|d_{t,e}\|^2\mid \mathcal{F}_t]
\le
L^2\eta_\ell^2 e^2 G^2$. Finally, by $d_t=\frac{1}{E}\sum_{e=0}^{E-1}d_{t,e}$ and Jensen inequality, we have
\begin{align}
\mathbb{E}[\|d_t\|^2\mid \mathcal{F}_t]
& \le
\frac{1}{E}\sum_{e=0}^{E-1}\mathbb{E}[\|d_{t,e}\|^2\mid \mathcal{F}_t] \nonumber \\
& \le
\frac{L^2\eta_\ell^2 G^2}{E}\sum_{e=0}^{E-1}e^2 \nonumber \\
& = \frac{L^2\eta_\ell^2 G^2 (E-1)(2E-1)}{6}. 
\end{align}
and
\begin{align}
\mathbb{E}[\|G_t\|^2\mid \mathcal{F}_t]
& \le
4\|\nabla f(\modelatt)\|^2
+4\|b_t\|^2
+\frac{2L^2\eta_\ell^2 G^2 (E-1)(2E-1)}{3}
+8\nu_{\rm sel}+\frac{2\sigma^2}{mE}.
\label{eq:norm_bound}
\end{align}

Finally substituting the bound for B1 in~\eqref{eq:progress_bound} and the bound for B2 in~\eqref{eq:norm_bound} into~\eqref{eq:smooth_cond}, we obtain
\begin{align}
\mathbb{E}[f(\modelattplusone)\mid \mathcal{F}_t]
&\le
f(\modelatt)
-\frac{\eta}{2}\|\nabla f(\modelatt)\|^2
+\eta\|b_t\|^2
+\eta\,\mathbb{E}[\|d_t\|^2\mid \mathcal{F}_t] \nonumber\\
&\quad
+\frac{L\eta^2}{2}\Big(
4\|\nabla f(\modelatt)\|^2
+4\|b_t\|^2
+4\mathbb{E}[\|d_t\|^2\mid \mathcal{F}_t]
+4\mathbb{E}[\|u_t\|^2\mid \mathcal{F}_t]
\Big) \nonumber\\
&=
f(\modelatt)
-\left(\frac{\eta}{2}-2L\eta^2\right)\|\nabla f(\modelatt)\|^2
+\left(\eta+2L\eta^2\right)\|b_t\|^2 \nonumber\\
&\quad
+\left(\eta+2L\eta^2\right)\mathbb{E}[\|d_t\|^2\mid \mathcal{F}_t]
+2L\eta^2\,\mathbb{E}[\|u_t\|^2\mid \mathcal{F}_t].
\label{eq:one_step}
\end{align}
Assuming $\eta\le \frac{1}{8L}$. Then we have
\begin{align}
\frac{\eta}{2}-2L\eta^2 \ge \frac{\eta}{4}, \label{DPP:thm2:proof:eta1}
\end{align}
and
\begin{align}
\eta+2L\eta^2 \le \frac{5\eta}{4}.
\label{DPP:thm2:proof:eta2}
\end{align}
Applying \eqref{DPP:thm2:proof:eta1} and \eqref{DPP:thm2:proof:eta2} to~\eqref{eq:one_step} gives
\begin{align}
\mathbb{E}[f(\modelattplusone)\mid \mathcal{F}_t]
& \le
f(\modelatt)
-\frac{\eta}{4}\|\nabla f(\modelatt)\|^2
+\frac{5\eta}{4}\|b_t\|^2
+\frac{5\eta}{4}\mathbb{E}[\|d_t\|^2\mid \mathcal{F}_t] \nonumber \\
& \quad +2L\eta^2\,\mathbb{E}[\|u_t\|^2\mid \mathcal{F}_t].
\end{align}
Taking full expectation and using the bound on expected bias,~\eqref{eq:drift_bound}, and~\eqref{eq:u_var_bound}, we obtain
\begin{align}
\mathbb{E}[f(\modelattplusone)] 
&\le
\mathbb{E}[f(\modelatt)]
-\frac{\eta}{4}\mathbb{E}\|\nabla f(\modelatt)\|^2
+\frac{5\eta}{4}\varepsilon^2  \nonumber \\
& \quad +\frac{5\eta}{4}\cdot \frac{L^2\eta_\ell^2 G^2 (E-1)(2E-1)}{6}
+2L\eta^2\left(2\nu_{\rm sel}+\frac{2\sigma^2}{mE}\right) \nonumber\\
&=
\mathbb{E}[f(\modelatt)]
-\frac{\eta}{4}\mathbb{E}\|\nabla f(\modelatt)\|^2
+\frac{5\eta}{4}\varepsilon^2  \nonumber \\
& \quad +\frac{5}{24}\eta L^2\eta_\ell^2 G^2 (E-1)(2E-1)
+4L\eta^2\left(\nu_{\rm sel}+\frac{\sigma^2}{mE}\right).
\label{eq:recursion}
\end{align}

Summing~\eqref{eq:recursion} over $t=0,1,\dots,T-1$, we have
\begin{align}
\mathbb{E}[f(\modelatT)]
& \le
f(\model_0)
-\frac{\eta}{4}\sum_{t=0}^{T-1}\mathbb{E}\|\nabla f(\modelatt)\|^2
 \nonumber \\
& \quad +T\bigg(\frac{5\eta}{4}\varepsilon^2
+\frac{5}{24}\eta L^2\eta_\ell^2 G^2 (E-1)(2E-1)  +4L\eta^2\left(\nu_{\rm sel}+\frac{\sigma^2}{mE}\right)
\bigg). 
\end{align}
Since $f(\modelatT)\ge f^\star$, we have
\begin{align}
\frac{\eta}{4}\sum_{t=0}^{T-1}\mathbb{E}\|\nabla f(\modelatt)\|^2 
& \le
f(\model_0)-f^\star + 
\frac{5}{24}\eta L^2\eta_\ell^2 G^2 (E-1)(2E-1)T
\nonumber \\
& \quad +\left(
\frac{5\eta}{4}\varepsilon^2
+4L\eta^2\left(\nu_{\rm sel}+\frac{\sigma^2}{mE}\right)
\right)T. 
\end{align}
Divide both sides by $(\eta/4)T$ to obtain
\begin{align}
\frac{1}{T}\sum_{t=0}^{T-1}\mathbb{E}\|\nabla f(\modelatt)\|^2
& \le
\frac{4\big(f(\model_0)-f^\star\big)}{\eta T}
+\frac{5}{6}L^2\eta_\ell^2 G^2 (E-1)(2E-1)
\nonumber \\
& \quad +5\varepsilon^2+16L\eta\left(\nu_{\rm sel}+\frac{\sigma^2}{mE}\right), 
\end{align}
which completes the proof.
\end{proof}

% \section{Extra Experiments}

% \begin{figure*}[htb!]
% \centering
% \begin{subfigure}
%     \centering
%     \includegraphics[width=0.32\textwidth]{figures/Imagenette_160px-LR-N30-m10-T20000-stepsize0.001-batch_size16-non-iid-copy-comparison-test_accuracy.png}
% \end{subfigure}
% \begin{subfigure}
%     \centering
%     \includegraphics[width=0.32\textwidth]{figures/Imagenette_160px-LR-N30-m10-T20000-stepsize0.001-batch_size16-non-iid-copy-comparison-worst_test_accuracy.png}
% \end{subfigure}
% \begin{subfigure}
%     \centering
%     \includegraphics[width=0.32\textwidth]{figures/Imagenette_160px-LR-N30-m10-T20000-stepsize0.001-batch_size16-non-iid-copy-comparison-worst_test_accuracy20_.png}
% \end{subfigure}
% \caption{Comparison of average, worst, worst $20\%$ test accuracy on Imagenette with $m=10$.}
% \end{figure*}

% \begin{figure*}[htb!]
% \centering
% \begin{subfigure}
%     \centering
%     \includegraphics[width=0.32\textwidth]{figures/CIFAR10-LR-N30-m10-T20000-stepsize0.001-batch_size16-non-iid-copy-comparison-test_accuracy.png}
% \end{subfigure}
% \begin{subfigure}
%     \centering
%     \includegraphics[width=0.32\textwidth]{figures/CIFAR10-LR-N30-m10-T20000-stepsize0.001-batch_size16-non-iid-copy-comparison-worst_test_accuracy.png}
% \end{subfigure}
% \begin{subfigure}
%     \centering
%     \includegraphics[width=0.32\textwidth]{figures/CIFAR10-LR-N30-m10-T20000-stepsize0.001-batch_size16-non-iid-copy-comparison-worst_test_accuracy20_.png}
% \end{subfigure}
% \caption{Comparison ofaverage, worst, worst $20\%$ test accuracy on CIFAR with $m=10$.}
% \end{figure*}

\end{document}